\documentclass{article}

\usepackage[preprint]{neurips_2026}

\usepackage[utf8]{inputenc}
\usepackage[T1]{fontenc}
\usepackage{url}
\usepackage{booktabs}
\usepackage{amsfonts}
\usepackage{nicefrac}
\usepackage{microtype}
\usepackage{xcolor}

\usepackage{amsmath,amssymb,amsthm,mathtools}
\usepackage{graphicx}
\usepackage{float}
\usepackage{enumitem}
\usepackage{algorithm}
\usepackage{algpseudocode}
\usepackage{multirow}

\usepackage{hyperref}
\usepackage[normalem]{ulem}
\hypersetup{colorlinks=true,linkcolor=blue,citecolor=blue,urlcolor=blue}

\newtheorem{theorem}{Theorem}
\theoremstyle{definition}
\newtheorem{definition}{Definition}

\title{Semantic Cooperative Games for Contribution Attribution in LLM-Based Multi-Agent Systems}
\author{
Pengyi Jiang\thanks{Equal contribution.} \\
New York University \\
\texttt{pj2366@nyu.edu}
\And
Xiaoguang Zhu\footnotemark[1] \\
University of California, Davis \\
\texttt{xgzhu@ucdavis.edu}
\And
Quanyan Zhu\thanks{Corresponding author.} \\
New York University \\
\texttt{qz494@nyu.edu}
}

\begin{document}

\maketitle

\begin{abstract}
Contribution attribution has become a central problem in LLM-based multi-agent systems, where final outputs are produced through multiple agents, message exchanges, and ordered workflow dependencies. Existing attribution methods often rely on counterfactual valuation, such as removing agents or comparing score changes across altered agent subsets. In language-mediated workflows, these methods require repeated model calls, introduce high variance, and do not explicitly capture the intermediate semantic states through which agents produce, preserve, and transform task-relevant information. We propose Semantic Cooperative Games (SCG), a framework that represents a realized language flow as a semantic generation hypergraph and induces an agent-level semantic value function on this structure. We define the Semantic Shapley Value (SSV) to allocate contribution over semantic support logic, and introduce SLIC, a single-trajectory algorithm that constructs the semantic hypergraph, recovers minimal semantic supports, applies Boolean absorption, and computes SSV without rerunning agent subsets. We prove that SSV reduces to the classical Shapley value under standard set-based, fully observable, and no-order-dependence conditions. On a medical benchmark satisfying these conditions, SLIC reduces computation cost by 93.3\% while remaining highly consistent with a Monte Carlo Shapley baseline. In more general multi-role workflows, SSV aligns with perturbation-induced score-drop profiles and exposes cases where semantic contribution and failure impact diverge. Overall, SLIC provides a fast, counterfactual-free, and interpretable attribution method for complex LLM-based multi-agent systems. Code is avaiable at: \href{https://github.com/PengYiJiang-john/SCG_SENSITIVITY}{SCG\_SENSITIVITY}

\end{abstract}
\section{Introduction}
Large language model (LLM)-based multi-agent systems increasingly solve complex tasks by distributing computation across interacting agents with specialized roles, communication protocols, and explicit workflow structures. Frameworks such as CAMEL~\citep{li2023camel}, AutoGen~\citep{wu2023autogen}, MetaGPT~\citep{hong2024metagpt}, and MedAgents~\citep{tang2024medagents} demonstrate the value of this organization for task decomposition, deliberative reasoning, and domain-specific collaboration. However, system behavior is no longer determined by a single model response. It emerges from sequences of agent actions, message dependencies, and intermediate semantic transformations. This makes it difficult to determine which agents shape the final output and where failures originate. This motivates the \emph{contribution attribution problem} (CAP): given a realized workflow trace, a value-reading interface, and a set of participating agents, assign each agent a contribution score that explains its role in producing the observed output. Such attribution supports failure diagnosis~\citep{zhang2025failureattribution}, workflow comparison~\citep{yang2025shapleyflow}, agent pruning~\citep{zhang2024agentprune}, and robustness analysis~\citep{chan2024agentmonitor}. CAP is also a concrete instance of the broader credit assignment problem, including temporal credit assignment in reinforcement learning~\citep{sutton1984temporal} and structural credit assignment in multi-agent systems~\citep{agogino2004unifying}.

A broad class of attribution methods can be characterized as a \emph{counterfactual valuation} route. Instead of identifying contribution from the realized workflow itself, these methods estimate how the system value changes under hypothetical modifications. They construct perturbed executions by removing, replacing, or masking an agent, message, feature, action, or workflow component~\citep{ribeiro2016lime,zeiler2014visualizing,lundberg2017shap}. The perturbed system is then sampled or rerun, and attribution is computed from the resulting value differences:
\[
\Delta\mathsf{System}
\;\xrightarrow{\;\text{sample / rerun}\;}
\widehat{\mathsf{Value}}
\;\xrightarrow{\;\text{value difference}\;}
\mathsf{Attribution}.
\]
Here, \(\widehat{\mathsf{Value}}\) denotes a value estimate obtained from altered executions rather than from the original trace. This counterfactual valuation route spans multi-agent reinforcement learning, cooperative game theory, and LLM-agent workflow attribution. COMA estimates agent-specific counterfactual action values with a centralized critic~\citep{foerster2018coma}, and difference-reward methods compare global rewards with counterfactual rewards obtained by removing or clamping an agent's contribution~\citep{wolpert2001optimal,nguyen2018credit}. In LLM-based collaboration, C3 estimates message-level causal impact through fixed-continuation replay and leave-one-out interventions~\citep{chen2026c3}, while ShapleyFlow evaluates alternative agentic workflow configurations~\citep{yang2025shapleyflow}. More generally, cooperative game-theoretic methods allocate contribution through marginal comparisons over coalitions or workflow components~\citep{shapley1953value,aumann1974values,young1985monotonic}. Despite their different intervention targets and aggregation rules, these methods share a common dependency: attribution is mediated by scalar values assigned to hypothetical configurations. This dependency creates two obstacles in language-mediated workflows as illustrated in Fig.~\ref{fig:semantic_vs_counterfactual}.

\textbf{Problem 1: Contribution is workflow-dependent.}
An LLM workflow induces a partial-order dependency structure over agent actions and messages. A set-based counterfactual, such as keeping or removing a subset of agents or components, therefore does not fully define the attribution target. The same subset can have different effects under different execution orders, routing rules, or downstream dependencies. An agent's contribution depends not only on what it generates, but also on where that content appears and how it is used downstream.

\textbf{Problem 2: Rerun-based valuation is costly and unstable.}
When values are obtained through rerun, replay, or rollout, each intervention can rewrite downstream context and trigger different model responses. This adds model calls, stochastic variance, and sensitivity to decoding, prompting, and orchestration details. Even with a surrogate evaluator, attribution remains tied to evaluator stability. These difficulties grow rapidly with the number of agents, messages, components, and interventions.

These obstacles suggest that CAP should be treated not only as a value-comparison problem, but also as a workflow-structural problem. In LLM-based multi-agent systems, contribution is expressed through the realized trajectory, where agents produce intermediate content, pass information downstream, reuse prior messages, and transform semantic states. Since black-box LLM internals are difficult to inspect, we use the language trajectory as the observable layer of this information flow. Prior work on agent-level monitoring and process evaluation~\citep{chan2024agentmonitor,gritta2026processeval} shows that such trajectories contain useful process signals. For CAP, the relevant signals are semantic: agents introduce, preserve, rewrite, or combine task-relevant content that later agents may reuse in reasoning. This motivates a shift from workflow structure to \emph{language flow}, which exposes the semantic states and transformations through which task value is produced.

To make this semantic dependency structure analyzable, we propose \textbf{Semantic Cooperative Games (SCG)}. As shown in Fig.~\ref{fig:semantic_vs_counterfactual}, SCG converts a realized language flow into a semantic generation hypergraph. The graph contains semantic nodes, support links, an ownership map from links to agents, and weighted output nodes read by a rubric. Given the player set \(N\) and graph \(G\), SCG defines a semantic game representation \((N,G)\). The graph then induces an agent-level semantic value function, making workflow dependency part of the attribution object. On this induced value function, we define the \textbf{Semantic Shapley Value (SSV)}. The support layer uses Boolean logic to represent sufficient semantic supports, following Boolean games and cooperative Boolean games~\citep{harrenstein2001boolean,dunne2008cooperativeboolean}. The allocation layer follows Shapley-style axiomatic allocation~\citep{shapley1953value,young1985monotonic}. Thus, SSV extends the classical Shapley value from unordered coalitions to semantic support structures. Under standard set-based, fully observable, and no-order-dependence conditions, it coincides with the classical Shapley value.

To compute SSV from a single trajectory, we propose \textbf{SLIC} (Semantic Logic Inversion for Contribution). SLIC reads the value-bearing output nodes and their rubric weights, recursively traces semantic predecessors in language flow, and constructs \(G\). It then extracts minimal link-level supports, maps links to agents through ownership, applies Boolean absorption, and aggregates multilinear coefficients to obtain SSV. This avoids rerunning agent subsets. In a medical benchmark satisfying the reduction conditions, SLIC matches a Monte Carlo baseline while reducing computation cost by 93.3\%. In more general multi-role workflows, SSV tracks agent-level score drops under perturbations and exposes cases where limited semantic contribution can still carry large failure impact.
\begin{figure}[t]

\centering
\includegraphics[width=0.8\linewidth]{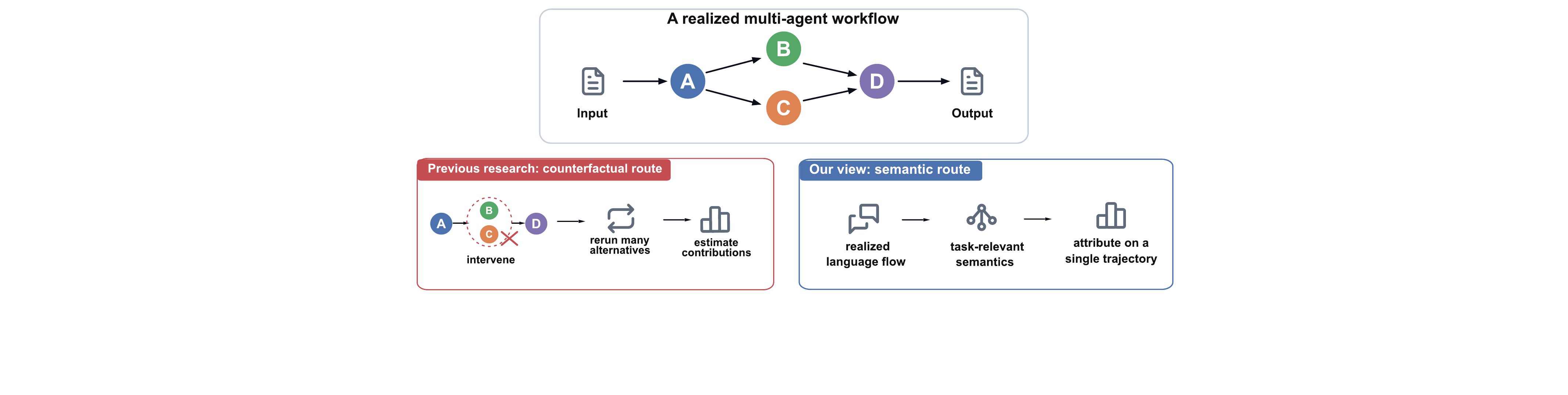}
\caption{Counterfactual vs. semantic routes for contribution attribution. \emph{Bottom left:} The traditional counterfactual route estimates contributions by masking components and rerunning the workflow to measure value changes, requiring many model calls and high computational cost. \emph{Bottom right:} The semantic route analyzes a single realized execution by tracing the language/semantic flow across agents. Contributions are computed directly from this dependency structure via the Semantic Shapley Value, with no reruns.}
\label{fig:semantic_vs_counterfactual}
\end{figure}

Our contributions are as follows:
\begin{itemize}
    \item We propose \textbf{SCG}, a semantic cooperative-game framework that represents a realized language flow as a semantic generation hypergraph and induces an agent-level value function from workflow-dependent semantic supports.

    \item We define the \textbf{Semantic Shapley Value (SSV)} and introduce a single-trajectory algorithm that computes SSV without counterfactual reruns. We prove that SSV coincides with the classical Shapley value under standard coalition-game conditions.

    \item We validate SCG/SLIC on reduction-consistency and structural-diagnosis benchmarks, showing that SLIC reduces computation cost while preserving Shapley consistency and that SSV provides interpretable attribution in complex multi-role workflows.
\end{itemize}

\section{Semantic Cooperative Games}

To define a Shapley-style attribution game for an LLM workflow, we first specify the semantic object from which value will be induced. An agent is observed through its language action: it produces, carries forward, or transforms semantic content. We therefore model each agent action as a semantic transformation and analyze the workflow through its dual language flow. 

Formally, let $T=(\mathcal{L},H)$, where each state in $\mathcal{L}\subseteq X$ is a semantic state produced at some step, and $H=\{h_u\}$ is the set of agent actions. To connect this trace to value, we use the rubric or value-reading interface. In rubric-based evaluation, the final scalar score decomposes into several score-bearing output semantics. Let $O=\{o_1,\dots,o_K\} $ denote these output semantics, and let $\omega$ assign their weights. The goal of this section is to extract the value-relevant semantic nodes from language flow and use them to construct a semantic generation hypergraph.

\paragraph{Assumption 1 (Semantic Separability and Source Identifiability).}
For any execution position $u$, every semantic node $s\subseteq y_u$ relevant to this paper is one of three types: native semantics, derived semantics, or persistent semantics. If $s$ is derived or persistent, its prior semantic source in the input state $x_u$ can be identified.

Under Assumption~1, each agent action $h_u$ gives a direct predecessor set for every semantic node it produces:
\[
\mathrm{src}_{h_u}(s)=
\begin{cases}
    \emptyset, & s \text{ is native},\\
    \{\tilde{s}\}, & s \text{ is persistent},\\
    P,\; P\subseteq x_u, & s \text{ is derived},
\end{cases}
\] where $\mathrm{src}_{h_u}(s)$ is the direct predecessor set of semantic node $s$ under action $h_u$. For each semantic node, let \(h(s)\) be the action that produced it. To recover the feature space of the game, we recursively trace each output node \(o_k\in O\) back to its predecessors. This yields the set of value-relevant semantic nodes \(S\). We define the initial semantic nodes as \(I:=\{s\in S:\mathrm{src}_{h(s)}(s)=\emptyset\}\). For each \(s\in S\setminus I\), the directed hyperedge \(e_s:\mathrm{src}_{h(s)}(s)\rightarrow s\). This gives the semantic generation hypergraph \(G=(S,L,N;\alpha,\omega,O,I)\), where \(N\) is the agent set in the current trajectory, \(L:=\{e_s:s\in S\setminus I\}\), and \(\alpha:L\to N\) records which agent owns each link, \(O \subseteq S\) denotes the output semantic nodes at which task value is evaluated, 
and \(\omega : O \to \mathbb{R}\) assigns their value weights.

We construct \(G\) in this backward direction because attribution should focus on the system behaviors expressed through task-relevant semantics. Starting from \(O\) anchors the construction at the semantic features read by the value interface, while tracing predecessors identifies how the system produced, reused, or transformed those features. The resulting graph is therefore a compact description of the realized behaviors from which agent-level support logic and value can be induced. Additional formal details and examples are deferred to the Appendix~\ref{app:linear-example}. Therefore, we introduce the definition of Semantic Cooperative Game as follows.

\begin{definition}[Semantic Cooperative Game]
\label{def:scg}
Given a language flow and its induced semantic generation hypergraph \(G=(S,L,N;\alpha,\omega,O,I)\), we define the semantic game representation as \(\mathcal{G}(G):=(N,G)\). Here, \(N\) is the player set, and \(G\) is the semantic generation hypergraph from which an agent-level semantic value function is induced. Players participate through their owned hyperedges, and value is read from the weighted output nodes \(O\).
\end{definition}

\section{SLIC: Semantic Logic Inversion for Contribution}

\begin{figure}
\label{overallframework}
    \centering

\includegraphics[width=1\linewidth]{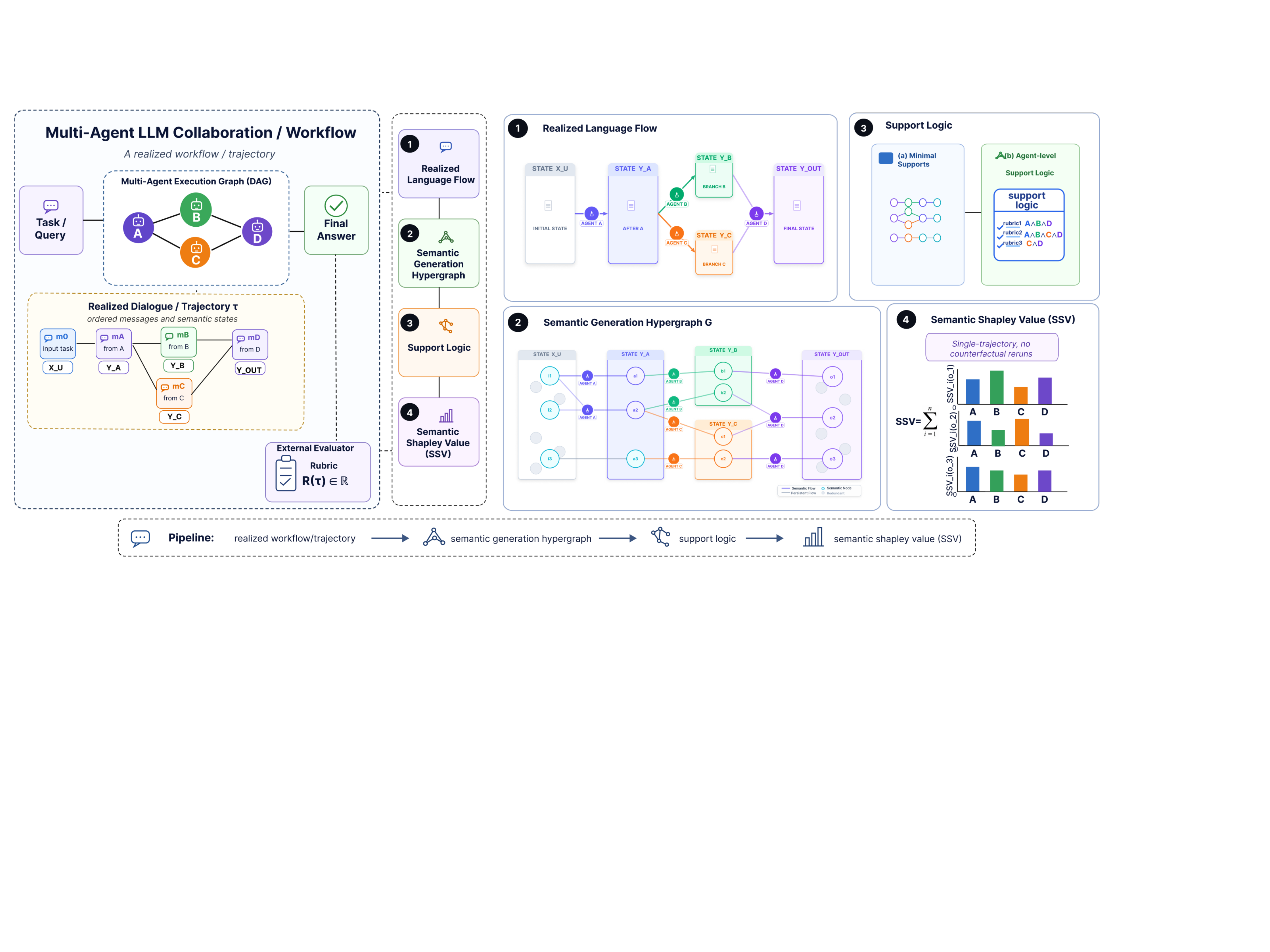}
\caption{
\textbf{Overall SCG/SLIC pipeline for semantic attribution.}
The key idea is to compute agent contributions directly from a \emph{single realized execution} by leveraging its semantic structure. A workflow is transformed into a semantic generation hypergraph $G$, from which SLIC traces value dependencies, recovers minimal support logic, and computes the Semantic Shapley Value (SSV).
This enables \emph{single-pass, structure-based attribution} without counterfactual reruns.
}
\end{figure}
 Once \(G\) is available, it can induce many possible value functionals and attribution rules, depending on which semantic outputs are selected and how they are valued. In this work, we focus on one canonical allocation rule on \(G\): the Semantic Shapley Value (SSV), which is uniquely determined by the Shapley axioms over \(G\). To compute SSV, SLIC recovers the support logic of value-bearing output nodes to generate \(G\) from the realized trajectory,and applies Boolean absorption and multilinear allocation to compute SSV. This section formalizes this single-trajectory computation, following the overall pipeline illustrated in Fig.~\ref{overallframework}.

\begin{algorithm}[t]
\caption{SLIC: construct $G$ from language flow and compute SSV}
\label{alg:slic}
\begin{algorithmic}[1]
\Require realized trajectory $\tau$, value-reading interface $R$, ownership map $\alpha$
\Ensure semantic generation hypergraph $G$, support logic, and SSV
\State Read $O=\{o_1,\dots,o_K\}$ and its weights $\omega$ from $R$
\For{$k=1,\dots,K$}
    \State Starting from $o_k$, recursively trace back its direct predecessors $\mathrm{src}_{h(s)}(s)$ along language flow until no new semantic nodes appear
\EndFor
\State Let the set of all traced semantic nodes be $S$, and define $I=\{s\in S:\mathrm{src}_{h(s)}(s)=\emptyset\}$
\State For each $s\in S\setminus I$, define $e_s:\mathrm{src}_{h(s)}(s)\rightarrow s$, and let $L=\{e_s:s\in S\setminus I\}$
\State Construct $G=(S,L,N;\alpha,\omega,O,I)$
\For{$k=1,\dots,K$}
    \State Extract the minimal link-level supports of $o_k$, lift them to the agent level, and apply Boolean absorption to obtain the support logic $f_{o_k}$
    \State Expand $f_{o_k}$ into multilinear / Möbius coefficients $c_T^{(o_k)}$
\EndFor
\State Aggregate the weighted coefficients over all $k$ and output $\mathrm{SSV}(G)$
\end{algorithmic}
\end{algorithm}

\subsection{From Semantic Supports to SSV}

Algorithm~\ref{alg:slic} constructs an attribution game from a realized trajectory in two stages. First, SLIC traces backward from the output nodes \(O\) to recover the value-relevant semantic flow and construct the semantic generation hypergraph \(G\). Second, it converts this graph to the agent level by identifying which subsets of agents provide sufficient semantic support for each value-bearing output node.

Fix an output node \(o_k\in O\). Let \(\mathcal{C}_{o_k}\subseteq 2^L\setminus\{\emptyset\}\) denote its minimal link-level conjunctive supports. Each \(C\in\mathcal{C}_{o_k}\) is a set of semantic links whose joint presence is sufficient to support \(o_k\). Through the ownership map, each link support \(C\) induces the agent set \(N(C):=\{\alpha(\ell):\ell\in C\}\subseteq N\). After deduplication and Boolean absorption, these induced agent sets yield the minimal support family \(\mathcal{M}_{o_k}\). The corresponding Boolean support function is
\begin{equation}
    f_{o_k}(x)
=
\bigvee_{M\in\mathcal{M}_{o_k}}
\bigwedge_{i\in M}x_i,
\qquad
x\in\{0,1\}^{|N|}.
\end{equation}
Here, \(x_i=1\) indicates that agent \(i\) is included in the selected agent configuration. Thus, \(f_{o_k}(x)=1\) if and only if the selected agents semantically support the rubric node \(o_k\). Aggregating these support functions with rubric weights yields the agent-level semantic value function induced by \(G\):
\begin{equation}
    V_G^{\mathrm{sem}}(x)
:=
\sum_{o_k\in O:\,x_{o_k}(G)=1}
\omega(o_k) f_{o_k}(x).
\end{equation}
This equation is the key step that separates SLIC from rerun-based attribution. The value function is not queried from counterfactual executions. Instead, it is induced analytically from the semantic supports observed in the realized trajectory.

To allocate this induced value, we expand each support function into its unique multilinear, or Möbius, form:
\begin{equation}
    f_{o_k}(x)
=
\sum_{T\subseteq N}
c_T^{(o_k)}
\prod_{i\in T}x_i,
\qquad
c_T^{(o_k)}
=
\sum_{R\subseteq T}
(-1)^{|T|-|R|}
f_{o_k}(\mathbf{1}_R).
\end{equation}
The coefficient \(c_T^{(o_k)}\) measures the semantic interaction carried jointly by the agent set \(T\) for the output node \(o_k\). These coefficients form the basis on which Shapley-style allocation is performed in Definition~\ref{def:ssv}.

\begin{definition}[Semantic Shapley Value]
\label{def:ssv}
Given the semantic representation of the game $\mathcal{G}(G)=(N,G)$, the semantic Shapley value induced by $G$ is
\[
\mathrm{SSV}_i(G)
\;:=\;
\sum_{o_k\in O:x_{o_k}(G)=1}
\omega(o_k)\sum_{T\subseteq N:\,i\in T}\frac{c_T^{(o_k)}}{|T|},
\qquad i\in N.
\]
\end{definition}

\subsection{Axiomatic Interpretation}

Definition~\ref{def:ssv} applies Shapley-style allocation to the semantic support expansion induced by \(G\). Each coefficient \(c_T^{(o_k)}\) is a semantic support term jointly carried by the agents in \(T\). The allocation of such a term follows the Shapley axioms: efficiency allocates the whole coefficient, dummy gives zero to agents outside \(T\), and additivity sums allocations over all semantic terms. Symmetry gives equal shares to agents that play the same role inside the same semantic support term \citep{shapley1953value,young1985monotonic}. Hence each \(i\in T\) receives \(c_T^{(o_k)}/|T|\). In this sense, SSV keeps the core Shapley behavior of sharing joint contribution, while avoiding counterfactual coalition evaluation.

Boolean absorption acts before allocation and handles a different kind of redundancy. It removes logically redundant support expansions so that equivalent support expressions define the same \(f_{o_k}\), the same coefficients, and the same SSV. For example, $
(A\wedge B)\vee B \equiv B
$ removes the redundant \(A\)-path before Shapley allocation. By contrast, if a semantic term genuinely requires the joint support \(A\wedge B\), absorption does not remove either agent; symmetry splits the remaining term between them. Supplementary link-support and absorption examples are given in Appendix~\ref{app:link-support}--\ref{app:absorption-example}; the degeneration of LOO under redundancy is discussed in Appendix~\ref{app:loo_redundancy}.
The full derivation is given in Appendix~\ref{app:ssv}.

\begin{theorem}[Equivalence of Semantic Shapley Value with the Classical Shapley Value]
\label{thm:reduction}
Suppose the system satisfies set-based value reachability, full observability, and no masking or order dependence along the realized language flow. Let $v : 2^N \to \mathbb{R}$ be a classical coalition value function, and let
\begin{equation}
\phi_i(v)
= \sum_{C \subseteq N \setminus \{i\}}
\frac{|C|!(|N|-|C|-1)!}{|N|!}
\bigl[v(C \cup \{i\}) - v(C)\bigr]
\end{equation}
denote the classical Shapley value.

Let $V_G^{\mathrm{sem}} : 2^N \to \mathbb{R}$ be the coalition function induced by the semantic structure of the system $G$. Under the above assumptions, the induced coalition function coincides with the classical one:
\begin{equation}
V_G^{\mathrm{sem}}(C) = v(C), \qquad \forall C \subseteq N.
\end{equation}
Consequently, for every agent $i \in N$, the semantic Shapley value satisfies
\begin{equation}
\mathrm{SSV}_i(G)=\phi_i(v).
\end{equation}
\end{theorem}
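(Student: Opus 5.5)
The argument has two stages. First, use the three hypotheses to show that the induced coalition function agrees pointwise with $v$. Second, use the fact that the allocation rule in Definition~\ref{def:ssv} is exactly the Harsanyi-dividend form of the Shapley value.

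\textbf{Stage 1: pointwise agreement.} I would formalize the hypotheses as follows.
\begin{itemize}
\item \emph{Set-based value reachability:} $v$ decomposes over the rubric nodes as $v(C)=\sum_{k}\omega(o_k)\,\mathbf{1}[\text{$o_k$ is attained when exactly the agents in $C$ act}]$, and whether $o_k$ is attained depends only on the set $C$.
\item \emph{Full observability:} every route by which a coalition can produce $o_k$ appears in $G$ as a link-level support $C'\in\mathcal{C}_{o_k}$, and the node is realized, $x_{o_k}(G)=1$.
\item \emph{No masking or order dependence:} whenever all links of a support are present, the support actually yields $o_k$, whatever the execution order and whatever other agents are present.
\end{itemize}
From these, for each $k$ and each $C\subseteq N$, I would show that $o_k$ is attained by $C$ if and only if $N(C')\subseteq C$ for some $C'\in\mathcal{C}_{o_k}$.
\begin{itemize}
\item \emph{($\Leftarrow$)} This direction uses the absence of masking and order dependence: the support's links all fire and produce $o_k$.
\item \emph{($\Rightarrow$)} This direction uses full observability: any attainment of $o_k$ is witnessed by some recorded support.
\end{itemize}
Boolean absorption replaces $\{N(C')\}$ by its minimal elements $\mathcal{M}_{o_k}$ without changing the monotone DNF. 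Hence $f_{o_k}(\mathbf{1}_C)$ equals the indicator of attainment. Summing with weights $\omega(o_k)$ gives $V^{\mathrm{sem}}_G(C)=v(C)$ for all $C$.

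\textbf{Stage 2: the allocation rule is the Shapley value.} By the Möbius formula for $c_T^{(o_k)}$ and linearity, the aggregated coefficients
\[
d_T:=\sum_k \omega(o_k)\,c_T^{(o_k)}
\]
are the Möbius transform of $V^{\mathrm{sem}}_G$. By Stage 1 they are therefore the Harsanyi dividends of $v$, so that $v=\sum_{T}d_T\,u_T$ with unanimity games $u_T(C)=\mathbf{1}[T\subseteq C]$. Note that $d_\emptyset=0$, since $f_{o_k}(\mathbf{0})=0$ for nonempty supports.

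For each unanimity game, the classical formula gives $\phi_i(u_T)=\mathbf{1}[i\in T]/|T|$. This is a short computation: $u_T(C\cup\{i\})-u_T(C)=1$ exactly when $T\setminus\{i\}\subseteq C$ and $i\in T$. Equivalently, it follows from symmetry, dummy and efficiency. Linearity of $\phi$ then gives
\[
\phi_i(v)=\sum_{T\ni i}\frac{d_T}{|T|},
\]
and expanding $d_T$ yields exactly the expression for $\mathrm{SSV}_i(G)$ in Definition~\ref{def:ssv}.

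\textbf{Main obstacle.} I expect the difficulty to lie in Stage 1, because the hypotheses are stated informally. I would begin the proof by fixing the precise readings given above, especially the converse direction that full observability must supply. A further subtlety is the restriction $x_{o_k}(G)=1$ in the definition of $V^{\mathrm{sem}}_G$. An unrealized output node contributes nothing to SSV, so for the two sides to match I must assume, as part of full observability, that unrealized nodes are unattainable by every coalition. Under that assumption such nodes also contribute nothing to $v$. Stage 2 is a standard linear-algebra identity and poses no difficulty.
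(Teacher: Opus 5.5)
Your proposal is correct and follows the paper's route. The paper likewise sets $v_{o_k}(C):=f_{o_k}(\mathbf 1_C)$, rewrites it as $\sum_{T\subseteq C}c_T^{(o_k)}$, and shows that each term contributes $1/|T|$ to every $i\in T$; it does this by evaluating the Shapley weights explicitly with the hockey-stick identity, where you invoke the known value of unanimity games, and it then sums over activated nodes by additivity. The one difference is that the paper essentially takes $V_G^{\mathrm{sem}}=v$ as the content of the hypotheses, defining $v$ directly as the weighted sum of the $v_{o_k}$ over nodes with $x_{o_k}(G)=1$, whereas your Stage 1 proves both directions and handles unrealized output nodes explicitly, which adds rigor without changing the argument.
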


The proof is given in Appendix~\ref{app:reduction}.

\subsection{A Trace Example with Absorption}

We illustrate the role of Boolean absorption with a simple five-agent trajectory. Agent \(A\) produces a background-fact node \(s_1\), agent \(B\) produces a key-evidence node \(s_2\), and agent \(C\) rewrites \(s_2\) into a prompt node \(s_3\). Agent \(D\) then generates an expanded-summary node \(s_4\) from \(\{s_1,s_2\}\), while agent \(E\) produces the rubric output node \(s_5\) from \(s_2\). The induced semantic generation hypergraph contains
\[
\emptyset\xrightarrow{A}s_1,\quad
\emptyset\xrightarrow{B}s_2,\quad
\{s_2\}\xrightarrow{C}s_3,\quad
\{s_1,s_2\}\xrightarrow{D}s_4,\quad
\{s_2\}\xrightarrow{E}s_5.
\]

Suppose \(s_5\) is the only activated output node. Link-level recovery may produce two candidate agent supports, \(\{B,E\}\) and \(\{B,C,E\}\). The second support passes through the rewritten node \(s_3\), but it is logically redundant because \(s_5\) is already supported by \(B\) and \(E\). Boolean absorption therefore removes the redundant expansion path and leaves
\[
f_{s_5}(x)=x_Bx_E,
\qquad
v_{s_5}^{\mathrm{sem}}(x)=\omega(s_5)x_Bx_E.
\]
Thus, the output-node contribution is split only between the indispensable agents \(B\) and \(E\):
\[
\mathrm{SSV}_B(G)=\mathrm{SSV}_E(G)=\frac{\omega(s_5)}{2},
\qquad
\mathrm{SSV}_A(G)=\mathrm{SSV}_C(G)=\mathrm{SSV}_D(G)=0.
\]

This example shows how SLIC avoids assigning credit to a semantically redundant rewrite. It recovers the source relations in language flow, constructs \(G\), derives the minimal support family after absorption, and then computes SSV on the resulting support logic.

More examples are provided in Appendices \ref{app:logic-examples}, \ref{app:structural-examples} and \ref{app:medical-case}.

\iffalse
\subsection{A Trace Example with Absorption}

Consider a five-agent trajectory. Agent $A$ outputs a background-fact node $s_1$, $B$ outputs a key-evidence node $s_2$, $C$ rewrites $s_2$ into a prompt node $s_3$, $D$ generates an expanded-summary node $s_4$ from $\{s_1,s_2\}$, and $E$ finally produces an output node $s_5$ corresponding to a rubric item. The corresponding semantic generation hypergraph $G$ contains five hyperedges:
\[
\emptyset\xrightarrow{A}s_1,\qquad
\emptyset\xrightarrow{B}s_2,\qquad
\{s_2\}\xrightarrow{C}s_3,
\]
\[
\{s_1,s_2\}\xrightarrow{D}s_4,\qquad
\{s_2\}\xrightarrow{E}s_5.
\]
If $s_5$ is the only activated rubric/output node, then link-level recovery may yield two candidate agent supports:
\[
\{B,E\},\qquad \{B,C,E\}.
\]
The second support is only a redundant expansion term along the rewritten node $s_3$, and is removed by Boolean absorption. Hence
\[
f_{s_5}(x)=x_Bx_E,\qquad
v_{s_5}^{\mathrm{sem}}(x)=\omega(s_5)x_Bx_E.
\]
Therefore,
\[
\mathrm{SSV}_B(G)=\mathrm{SSV}_E(G)=\omega(s_5)/2,\qquad
\mathrm{SSV}_A(G)=\mathrm{SSV}_C(G)=\mathrm{SSV}_D(G)=0.
\]
This example illustrates the four steps of SLIC: recover $\mathrm{src}_{h_u}$ on language flow, build $G$, recover the support family for the rubric item and apply absorption, and finally compute SSV.
\fi

\section{Experiments}

\subsection{Experiment 1: Consistency with Classical Shapley in Medical Multi-Agent Collaboration}

Experiment 1 tests the reduction claim in Theorem~\ref{thm:reduction}. We construct a controlled medical multi-agent workflow in which the theorem's assumptions are satisfied by construction: agents solve independent semantic subproblems and their outputs are aggregated without interaction effects. We then ask whether SLIC, using only the realized semantic structure, agrees with the classical Shapley attribution for the same workflow.

\subsubsection{Task Setup}
We use HealthBench~\citep{arora2025healthbench} as the test domain and implement a three-agent workflow in which agents solve separate semantic subproblems independently and their outputs are aggregated afterward. Agent A handles history and symptom extraction, Agent B handles differential diagnosis, and Agent C handles treatment planning and prognosis. This construction satisfies the reduction assumptions by design: the output of any subset depends only on which independent modules are present and how their outputs are aggregated, rubric hits are directly observable, and no module's contribution is altered by interaction or execution order. Let $v(S)$ denote the rubric score obtained by an agent subset $S\subseteq N$. Our target quantity is the classical Shapley value
\begin{equation}
\phi_i(v)=\sum_{S\subseteq N\setminus\{i\}} \frac{|S|!(|N|-|S|-1)!}{|N|!}\big(v(S\cup\{i\})-v(S)\big).
\end{equation}

\subsubsection{Evaluation Protocol}

The data come from the public medical QA benchmark HealthBench (\texttt{openai/healthbench}). Implementation details and model configurations are provided in Appendix~\ref{app:config}.
We retain only complex cases with at least $R\ge 7$ rubric items and sample 50 cases for the main test set. All generation and judging are run in a deterministic setting. Since the baseline judge still exhibits run-level disagreement on a small subset of rubric items, the main analysis removes those ambiguous items; the corresponding statistics and cause analysis are given in Appendix~\ref{app:baselineinsta}. We report attribution error $L1=\|\phi^{\mathrm{pred}}-\phi^{\mathrm{GT}}\|_1$, rank consistency Kendall $\tau_b$, and Micro Hit Rate for local logic recovery. We compare SLIC against three alternatives. MC-GT exhaustively scores all non-empty subsets and computes the exact Shapley value as the oracle reference. LOO estimates contribution by $v(N)-v(N\setminus\{i\})$. A Holistic LLM Judge directly outputs contribution proportions for A/B/C and normalizes them to the joint score $v(N)$. The systematic degeneration of LOO under redundancy is discussed in Appendix~\ref{app:loo_redundancy}.

\subsubsection{Main Results}

Table~\ref{tab:healthbench-main} reports the main results on 50 complex cases. SLIC achieves the best attribution accuracy and ranking consistency among the low-cost methods while retaining the same low-call regime as the holistic judge. Under the same low-call regime as the holistic judge, SLIC reduces the L1 error from 12.90 to 6.61 and raises Kendall $\tau_b$ to 0.814, outperforming both LOO and the holistic baseline. At the micro level, SLIC achieves a Micro Hit Rate of 0.857 (150/175), indicating that the aggregate attribution gains are tied to more reliable local support recovery.

We do not scale this consistency experiment further in the main text because MC-style baselines require repeated subset scoring, and their cost grows exponentially with the number of agents. As a supplement, Appendix~\ref{app:scenario1-4agent} reports a 4-agent extension: under corrected-cost accounting, SLIC reduces cost by 93.3\% relative to MC while still maintaining low attribution error (L1 \(=5.700\)).

\begin{table}[t]
\centering
\caption{Performance comparison on HealthBench. $R$ denotes a normalized per-run cost unit.}
\label{tab:healthbench-main}
\begin{tabular}{lcccc}
\toprule
Method & Kendall $\tau_b \uparrow$ & L1 Error $\downarrow$ & API Calls (per case) $\downarrow$ & Cost Reduction $\uparrow$ \\
\midrule
Monte Carlo (GT) & 1.000 & 0.00 & $7R$ (avg 95.9) & Baseline \\
Leave-One-Out (LOO) & 0.575 & 15.08 & $4R$ (avg 54.8) & 42.9\% \\
Holistic LLM Judge & 0.479 & 12.90 & $R$ (avg 13.7) & 85.7\% \\
\textbf{SLIC (Ours)} & \textbf{0.814} & \textbf{6.61} & $R$ (avg 13.7) & \textbf{85.7\%} \\
\bottomrule
\end{tabular}

\vspace{4pt} 
\end{table}

\iffalse
\begin{figure}[t]
    \centering
    \includegraphics[width=0.95\linewidth]{scg_ai_figures/cost_accuracy_healthbench.png}
    \caption{Cost--accuracy trade-off under the standard setting.}
    \label{fig:healthbench-cost-accuracy}
\end{figure}
\fi

\subsubsection{Error Analysis}

Manual inspection of the high-error cases reveals four recurring failure modes: OR-miss under parallel triggering paths, polarity inversion for negative rubric items, asymmetric penalties when multiple agents jointly fail a constraint, and residual judge noise caused by rubric ambiguity. Representative cases and run-level analyses are provided in Appendix~\ref{app:errorcaselr}--\ref{app:baselineinsta}. Overall, the current bottleneck lies more in the induction of logic and the judge instability than in the SSV definition itself.

\subsection{Experiment 2: Diagnosing Complex Multi-Role Multi-Agent Architectures}
Experiment 2 studies SSV in settings where agent-level contribution is latent rather than directly identifiable. In complex multi-role workflows, the final system score does not come with a ground-truth decomposition into per-agent contributions, and no standard benchmark exists for node importance. We therefore use intervention-induced degradation as an external behavioral probe rather than as a ground-truth attribution target. These probes are imperfect: removal can be confounded by redundancy, and injected corruption can reflect both semantic disruption and structural sensitivity. Nevertheless, if SSV captures functionally consequential node importance, its profile should align with the degradation patterns induced by sufficiently strong interventions.

\subsubsection{Task Setup}

We build a diagnostic multi-agent benchmark with four task settings and 50 cases per setting. All settings follow the same healthy-workflow design principle: responsibility boundaries are explicit, partial access is controlled, downstream nodes cannot freely reconstruct missing upstream semantics, and structural privilege is not overly concentrated in agents with weak semantic responsibility. 
inspired by QMSum~\citep{zhong2021qmsum}, TAT-QA~\citep{zhu2021tatqa}, LegalBench~\citep{guha2023legalbench}, and HealthBench~\citep{arora2025healthbench}, without using samples from the original benchmarks. The tasks cover meeting summarization, numerical reasoning, rule-based judgment, and medical safety QA, respectively. They are designed to evaluate response quality across diverse scenarios.
Table~\ref{tab:experiment2-settings} summarizes the workflow structure and role allocation.Implementation details and model configurations are provided in Appendix~\ref{app:config}.

\begin{table}[t]
\centering
\small
\caption{Task settings and agent responsibilities in Experiment 2.}
\label{tab:experiment2-settings}
\resizebox{\linewidth}{!}{
\begin{tabular}{lllllll}
\toprule
Setting & Flow & Agent A & Agent B & Agent C & Agent D & Agent E \\
\midrule
Meeting decision summarization & A/B/C $\to$ D $\to$ E & Decision and scope & Constraints and blockers & Action items and owners & Review, backup options, advice & Transparent pass-through \\
Table-text numerical reasoning & A/B $\to$ C $\to$ D $\to$ E & Targets and values & Computation rules & A/B-based computation & Explanation and final answer & Transparent pass-through \\
Rule application & A/B/C $\to$ D $\to$ E & Claim focus & Main rule & Conditions and exceptions & Rule application and final judgment & Transparent pass-through \\
HealthBench medical safety & A/B $\to$ C $\to$ D $\to$ E & Raw case facts & Formatting and remote-answer boundary & Risk assessment & Next-step action and safety boundary & Transparent pass-through \\
\bottomrule
\end{tabular}}
\end{table}

\subsubsection{Evaluation Protocol}

For each case, we first compute the SSV profile on the unperturbed trajectory. We then intervene on each agent using weak, medium, and strong injected hallucinations, together with an LOO / C3-style absence intervention. Each intervention produces an agent-indexed score-drop profile. We normalize profiles within each setting and report Spearman $\rho$ between SSV and the perturbation-induced profile.

\subsubsection{Main Results}

\begin{table}[t]
\centering
\small
\caption{Shape alignment (Spearman $\rho$) between SSV and perturbation-induced score-drop profiles in Experiment 2.}
\label{tab:experiment2-alignment}
\begin{tabular}{lcccc}
\toprule
Setting & Weak & Medium & Strong & LOO / C3-style \\
\midrule
Meeting decision summarization & 0.718 & 1.000 & 1.000 & 0.975 \\
Table-text numerical reasoning & 0.667 & 0.900 & 0.900 & 0.900 \\
Rule application & 0.500 & 0.700 & 0.700 & 0.700 \\
HealthBench-style medical safety & 0.564 & 0.872 & 0.872 & 0.667 \\
\midrule
Mean & 0.612 & 0.868 & 0.868 & 0.810 \\
\bottomrule
\end{tabular}
\end{table}

Table~\ref{tab:experiment2-alignment} reports the alignment between the clean-workflow SSV profile and the perturbation-induced score-drop profiles across four task settings and four intervention types. In this experiment, SSV is evaluated through its ability to predict which agents are behaviorally consequential in a realized multi-agent workflow. zAcross settings, SSV aligns strongly with the profiles induced by medium and strong hallucination interventions (mean $\rho=0.868$ for both) and also shows substantial alignment with LOO / C3-style absence interventions (mean $\rho=0.810$). Weak perturbations yield lower and less stable alignment (mean $\rho=0.612$).

This pattern is consistent with the intended interpretation of SSV. When an intervention sufficiently disrupts an agent's semantic function, the resulting degradation profile closely follows the importance pattern predicted by SSV. Weak perturbations often leave the agent's effective role only partially changed, while absence-based interventions can still be affected by redundancy when other agents compensate for the missing input. Overall, the results indicate that SSV captures functionally meaningful node importance beyond the reduction setting and serves as a practical diagnostic signal for complex multi-role multi-agent workflows.

\begin{figure}[t]
\centering
\includegraphics[width=\linewidth]{scg_ai_figures/experiment2_radar.png}
\caption{Radar plots compare the normalized SSV profile with score-drop profiles induced by weak, medium, strong, and leave-one-out interventions across four evaluation settings. Greater overlap indicates stronger alignment between SSV and intervention-induced degradation patterns.}
\label{fig:experiment2-radar-overview}
\end{figure}

\subsubsection{Structural Analysis}

Fig.~\ref{fig:experiment2-radar-overview} shows the structural picture behind these numbers. In most healthy workflows, medium and strong perturbation profiles align with SSV, and LOO / C3-style interventions often follow the same trend. The main deviations reveal two distinct structural effects. First, privilege--capability mismatch occurs when an agent performs limited explicit semantic work but still has enough structural control to collapse the system when attacked. This indicates that semantic contribution and structural privilege can diverge as detailed in Appendix~\ref{app:scenario2-misalignment}. Second, removal baselines can fail under redundancy: if two agents provide equivalent critical inputs, removing either one can leave the other sufficient to reconstruct the result, causing LOO / C3-style methods to underestimate both. Additional alignment figures, counterexamples, and raw profile values are provided in Appendix~\ref{app:scenario2-design}--\ref{app:radars} and Appendix~\ref{app:loo_redundancy}.

\section{Conclusion}

This paper studies contribution attribution in LLM-based multi-agent systems and proposes an interpretable semantic allocation framework. Our central observation is that an agent's contribution should not be determined only by its presence in a counterfactual coalition, but by the value-relevant semantics it generates, preserves, or revises along a realized collaborative trajectory. To capture this structure, we formulate attribution on the semantic generation hypergraph induced by the language flow, where redundancy, substitution, joint support, and semantic absorption among agents can be explicitly represented. On the theoretical side, we provide an axiomatic characterization of semantic contribution attribution and show that the resulting allocation is a well-defined Shapley value over semantic support structures. This extends the classical Shapley value from coalition-level utility functions to semantic structures induced by LLM workflows, while recovering the classical Shapley value under standard set-based conditions. Empirically, our results show that the proposed method produces stable and interpretable agent-level contribution signals while substantially reducing the need for costly counterfactual reruns. Overall, this work offers a principled and practical foundation for low-cost contribution analysis in LLM-based multi-agent collaboration.

\newpage
\bibliographystyle{plainnat}
\bibliography{ref}

\newpage
\appendix
\section{Related Works}
\label{app:related-work}

%Our work lies at the intersection of LLM-based multi-agent systems, cooperative game theory, credit assignment, and process-level evaluation. We review these lines in turn and highlight the gap that motivates a semantic formulation of contribution attribution.

\paragraph{LLM-based multi-agent systems and agentic workflows.}
Recent multi-agent LLM frameworks show that capability gains arise not only from scaling the base model, but also from explicit coordination mechanisms. CAMEL, AutoGen, and MetaGPT demonstrate the utility of role specialization, inter-agent conversation, and modular decomposition \citep{li2023camel,wu2023autogen,hong2024metagpt}. Subsequent systems such as ChatDev and AgentVerse treat collaboration itself as the primary computational object and study how structured communication shapes collective behavior \citep{qian2024chatdev,chen2023agentverse}. Other work investigates specific coordination ingredients, including theory-of-mind reasoning across agents \citep{li-etal-2023-theory}, communication pruning for reducing coordination overhead \citep{zhang2024agentprune}, and automated workflow search and construction \citep{zhang2024aflow}. In domain-specific settings, MedAgents shows that carefully designed multi-expert collaboration can be effective even in high-stakes medical reasoning \citep{tang2024medagents}. Taken together, these studies establish that the performance of agentic systems depends crucially on workflow structure, communication pathways, and the handling of intermediate artifacts, rather than merely on which agents are present.

\paragraph{Cooperative game theory and structured cooperation.}
The theoretical basis for our work comes from cooperative game theory, where the Shapley value remains the canonical starting point for contribution allocation \citep{shapley1953value}. Classical extensions already suggest that structured cooperation requires structured value notions. Harsanyi's bargaining and dividend-based formulations decompose coalition value into finer interaction terms \citep{harsanyi1959bargaining,harsanyi1963simplified}. Aumann and Shapley analyze value in non-atomic settings, and Young gives an axiomatic perspective based on monotonicity \citep{aumann1974values,young1985monotonic}. Owen's multilinear extension provides an algebraic representation of games that is particularly relevant when allocations are computed from decomposed interaction terms \citep{owen1972multilinear}. When cooperation is constrained by predefined groups or communication relations, classical work already departs from the plain characteristic-function view: Aumann and Dr\`eze study coalition structures, Myerson studies graph-restricted cooperation, and Owen extends value allocation to games with a priori unions \citep{aumann1974coalition,myerson1977graphs,owen1977unions}. Further generalizations include weighted Shapley values and potential-based characterizations \citep{kalai1987weighted,hart1989potential}. Even in voting and committee settings, the Shapley--Shubik and Banzhaf lines show that influence depends on structural notions of pivotality rather than on presence alone \citep{shapley1954committee,banzhaf1965weighted,dubey1979banzhaf}. This broader literature supports our central premise that once cooperation is mediated by structure, the value object itself should be defined on that structure.

\paragraph{Credit assignment from reinforcement learning to LLM collaboration.}
A second closely related tradition is credit assignment. Early work on temporal credit assignment in reinforcement learning formalized the difficulty of tracing delayed outcomes back to individual decisions \citep{sutton1984temporal}. Later work generalized this perspective to structural settings and collective systems, showing that reward redistribution and payoff design must account for interaction topology rather than only individual actions \citep{agogino2004unifying,wolpert2001optimal}. In multi-agent reinforcement learning, COMA uses a counterfactual baseline to isolate the marginal effect of each agent action on joint return \citep{foerster2018coma}, and subsequent work studies credit assignment under global rewards and collective objectives \citep{nguyen2018credit}. More recently, this counterfactual line has been adapted to language-mediated collaboration: C3 estimates message-level responsibility through fixed-continuation replay and leave-one-out interventions in LLM collaboration \citep{chen2026c3}. These approaches provide useful causal intuitions, but they generally identify contribution by rerunning altered systems, which becomes expensive and unstable in large language-based workflows.

\section{Limitations and Discussion}
\label{app:limitation}
SCG and SLIC are intended for attribution on realized language-flow traces, not for all forms of multi-agent evaluation. The first limitation is semantic recovery. SLIC assumes that value-bearing semantic nodes and their support relations can be extracted with sufficient reliability. This is reasonable for rubric-based tasks with explicit intermediate outputs, but remains difficult when support is implicit, negated, masked by later text, or distributed across paraphrases. In those cases, errors in the support logic directly affect the SSV.

The second limitation is the value-reading interface. Our framework attributes the value that the interface reads from \(O\). If the rubric is incomplete, ambiguous, or unstable between judge calls, then the resulting attribution inherits that uncertainty. This is not a failure of the Shapley allocation step, but it limits the maximum reliability of any method that depends on the same observed value signal.

%The third limitation is the scope. Our experiments cover controlled reduction settings and medium-scale diagnostic workflows. They do not yet cover long-horizon agent systems with persistent memory, tool use, feedback loops, or adaptive role changes. Such systems may require dynamic versions of language-flow construction and support logic, rather than a single static graph extracted from one trajectory.

Despite these limitations, the main message is that contribution attribution in LLM-based multi-agent systems need not be reduced to black-box counterfactual reruns. By treating language flow as the observable layer and semantic support as the value-bearing structure, SCG gives a concrete object for attribution, and SLIC provides a fast single-trajectory computation path. This speed matters: by exposing which semantic transformations support task value without repeated reruns, SSV makes downstream workflow auditing, robustness analysis, agent pruning, and contribution-aware system redesign practically possible. We leave these intervention-oriented uses to future work.

\section{Workflow and Language Flow}
\label{app:wf-lf}

This appendix fills in the workflow, language flow, and value-node definitions that are compressed in the main text.

For any execution position $u$, let the input and output satisfy
\[
x_u,\;y_u\in\mathcal X,
\qquad
y_u=F^a(x_u;\theta^a).
\]
Here $x_u$ and $y_u$ denote the semantic states before and after this step, $a$ is the acting agent, and $\theta^a$ denotes the stable configuration associated with that agent.

The language flow in the main text is written as
\[
\mathcal T=(L,\mathcal H),
\]
where $L\subseteq\mathcal X$ is the set of semantic states propagated along the workflow channel, and $\mathcal H=\{h_u:u\in V\}$ is the set of corresponding agent actions. Workflow describes the collaboration structure among agents, whereas language flow describes how those agent actions act on semantic states.

We then introduce a task value function on the semantic state space:
\[
v:\mathcal X\to\mathbb R.
\]
We further assume that $v$ admits a finite discriminative decomposition, namely that there exist finitely many discriminators
\[
r_k:\mathcal X\to\{0,1\},\qquad k=1,\dots,m,
\]
and weights $w_k\in\mathbb R$ such that
\[
v(z)=\sum_{k=1}^{m}w_k\,r_k(z),\qquad \forall z\in\mathcal X.
\]
In practice, $\{r_k\}_{k=1}^{m}$ can be viewed as a set of rubric-style yes-or-no evaluation items.

For any output semantic state $y_u$, let
\[
O_u
\]
denote the set of local semantic components in $y_u$ that can activate an evaluation item. We call the elements of $O_u$ value nodes. Let
\[
O:=\bigcup_u O_u
\]
be the set of all value nodes.

\section{Recursive Closure Construction from Language Flow}
\label{app:closure}

Next, we do not keep all semantics, but only those lying on the generation chains leading to the value nodes. Assumption~1 in the main text can be restated here as follows: for any execution position $u$, the local semantic components in the output state $y_u$ that enter the task value formation process can be identified and separated; each such local semantic component $s\subseteq y_u$ belongs to one of three categories, namely native, derived, or persistent semantics. If $s$ is derived or persistent, its previous semantic source in the input state $x_u$ can be identified.

Accordingly, for any $h_u\in\mathcal H$, define the source mapping
\[
\operatorname{src}_{h_u}
\]
as follows. For any local semantic component $s\subseteq y_u$, let
\[
\operatorname{src}_{h_u}(s)=
\begin{cases}
\varnothing, & s \text{ is native},\\[4pt]
\{\tilde s\}, & s \text{ is persistent},\\[4pt]
P,\; P\subseteq x_u, & s \text{ is derived},
\end{cases}
\]
where $\tilde s$ denotes the same old semantic unit preserved from the input state $x_u$, and $P$ denotes the set of old semantic sources that generate $s$.

Starting from the value nodes, define recursively
\[
S^{(0)}:=O,
\]
\[
S^{(t+1)}
=
S^{(t)}
\cup
\bigcup_u
\bigcup_{\substack{s\in S^{(t)}\\ s\subseteq y_u}}
\operatorname{src}_{h_u}(s),
\qquad t=0,1,2,\dots,
\]
and let
\[
S:=\bigcup_{t\ge 0}S^{(t)}.
\]
In other words, $S$ is the minimal semantic set obtained by backward recursive closure from the value-node set $O$ along language flow.

After obtaining the node set $S$, we further extract effective input--output relations from local operations in language flow. Specifically, for any $h_u\in\mathcal H$, if there exist
\[
Q\subseteq S,\qquad R\subseteq S,
\]
where $Q$ comes from the input state $x_u$, $R$ comes from the output state $y_u$, and the nodes in $R$ are supported by $Q$ at the current step, then
\[
\ell=(Q,R)
\]
is called a semantic link induced by $h_u$. Let $\mathcal L$ be the set of all such links.

This yields the semantic generation graph
\[
G=(S,\mathcal L,N;\alpha,\omega,O,I),
\]
where $\alpha:\mathcal L\to N$ is the link ownership map, $\omega$ is the node-weight function, and $I$ is the set of initial semantic nodes. In the main text, for notational consistency, we simply write $\mathcal L$ as $L$.

\section{Full Derivation of Semantic Shapley Value}
\label{app:ssv}

The main text only gives the final formula for SSV. Here we provide the intermediate derivation.

\paragraph{A. From link supports to agent supports.}
Fix an output node $o_k\in O$. Let
\[
\mathcal{C}_{o_k}\subseteq 2^L\setminus\{\emptyset\}
\]
be the set of all minimal link-level conjunctive supports that semantically support $o_k$. For any $C\in\mathcal{C}_{o_k}$, define the corresponding agent set by
\[
N(C):=\{\alpha(\ell):\ell\in C\}\subseteq N.
\]
Collect all such agent sets and remove strict redundant supersets, yielding the minimal subject support family
\[
\mathcal{M}_{o_k}:=\min_{\subseteq}\{N(C):C\in\mathcal{C}_{o_k}\}.
\]
Here, $\min_{\subseteq}$ means keeping only inclusion-minimal sets. In the main-text example, the absorption of $\{B,E\}$ over $\{B,C,E\}$ is a direct instance of this step.

\paragraph{B. From the support family to support logic.}
For each output node $o_k$, define the agent-level support function
\[
f_{o_k}(x)=\bigvee_{M\in\mathcal{M}_{o_k}}\ \bigwedge_{i\in M}x_i,
\qquad
x\in\{0,1\}^{|N|}.
\]
Here $x_i=1$ means that agent $i$ is included in the current configuration. Therefore,
\[
f_{o_k}(x)=1
\]
if and only if the configuration $x$ is semantically sufficient to support the output node $o_k$. Using the weight $\omega(o_k)$ of the output node, we obtain the corresponding local value function
\[
v_{o_k}^{\mathrm{sem}}(x):=\omega(o_k)f_{o_k}(x).
\]

\paragraph{C. From coefficient expansion to SSV.}
For each $o_k$, the function $f_{o_k}$ has a unique multilinear expansion:
\[
f_{o_k}(x)=\sum_{T\subseteq N}c_T^{(o_k)}\prod_{i\in T}x_i,
\]
where the coefficients are given by M\"obius inversion:
\[
c_T^{(o_k)}=\sum_{R\subseteq T}(-1)^{|T|-|R|}f_{o_k}(\mathbf{1}_R).
\]
Here $\mathbf{1}_R$ is the indicator vector that is $1$ exactly on $R$. The meaning of this expansion is that each monomial
\[
\prod_{i\in T}x_i
\]
corresponds to a basic joint support unit. If its coefficient is $c_T^{(o_k)}$, then the total weight of that unit is $\omega(o_k)c_T^{(o_k)}$.

\medskip

For any joint support unit $T$, its weight is split evenly among the agents in $T$. Hence the allocation that agent $i$ receives from output node $o_k$ is
\[
\phi_i^{\mathrm{sem}}(o_k;G)
=
\omega(o_k)\sum_{T\subseteq N:\,i\in T}\frac{c_T^{(o_k)}}{|T|}.
\]
Summing over all activated output nodes gives the graph-level semantic Shapley value used in the main text:
\[
\mathrm{SSV}_i(G)
=
\sum_{o_k\in O:x_{o_k}(G)=1}\phi_i^{\mathrm{sem}}(o_k;G)
=
\sum_{o_k\in O:x_{o_k}(G)=1}\omega(o_k)\sum_{T\subseteq N:\,i\in T}\frac{c_T^{(o_k)}}{|T|}.
\]
Therefore, SSV is not obtained by directly averaging over minimal support sets. Instead, it keeps the full Boolean support logic first, and then performs analytical allocation on the multilinear / M\"obius basis. This is precisely why it can handle shared predecessors, parallel support paths, and absorption relations.

\section{Proof of Theorem~\ref{thm:reduction}}
\label{app:reduction}

\begin{proof}
Fix an activated output node $o_k$. Consider its agent-level Boolean support logic
\[
f_{o_k}(x)=\sum_{T\subseteq N}c_T^{(o_k)}\prod_{j\in T}x_j,
\qquad x\in\{0,1\}^{|N|}.
\]
Under the set-based, fully observable, and no-masking / no-order-dependence conditions stated in the theorem, whether the node $o_k$ is hit depends only on the subset of participating agents. Hence $f_{o_k}$ can be naturally viewed as a Boolean characteristic function on the coalition space $2^N$: for any $C\subseteq N$, let $\mathbf 1_C\in\{0,1\}^{|N|}$ be its indicator vector and define the coalition hit function
\[
v_{o_k}(C):=f_{o_k}(\mathbf 1_C).
\]
Thus $v_{o_k}:2^N\to\{0,1\}$ is a standard characteristic function of a cooperative game.

From the multilinear expansion of $f_{o_k}$, for any $C\subseteq N$,
\[
v_{o_k}(C)
=
f_{o_k}(\mathbf 1_C)
=
\sum_{T\subseteq N}c_T^{(o_k)}\prod_{j\in T}\mathbf 1_C(j).
\]
Note that
\[
\prod_{j\in T}\mathbf 1_C(j)
=
\begin{cases}
1, & T\subseteq C,\\
0, & T\not\subseteq C,
\end{cases}
\]
and therefore
\[
v_{o_k}(C)=\sum_{T\subseteq C}c_T^{(o_k)}.
\]

Now for any $C\subseteq N\setminus\{i\}$,
\[
v_{o_k}(C\cup\{i\})-v_{o_k}(C)
=
\sum_{T\subseteq C\cup\{i\}}c_T^{(o_k)}
-
\sum_{T\subseteq C}c_T^{(o_k)}.
\]
All terms not containing $i$ cancel, hence
\[
v_{o_k}(C\cup\{i\})-v_{o_k}(C)
=
\sum_{T\subseteq C\cup\{i\}:~i\in T}c_T^{(o_k)}.
\]

Substitute this into the classical Shapley marginal formula. Let $n=|N|$. Then
\[
\phi_i(v_{o_k})
=
\sum_{C\subseteq N\setminus\{i\}}
\frac{|C|!(n-|C|-1)!}{n!}
\bigl(v_{o_k}(C\cup\{i\})-v_{o_k}(C)\bigr)
\]
\[
=
\sum_{C\subseteq N\setminus\{i\}}
\frac{|C|!(n-|C|-1)!}{n!}
\sum_{T\subseteq C\cup\{i\}:~i\in T}c_T^{(o_k)}.
\]
Swapping the order of summation gives
\[
\phi_i(v_{o_k})
=
\sum_{T\ni i}c_T^{(o_k)}
\sum_{C:\,T\setminus\{i\}\subseteq C\subseteq N\setminus\{i\}}
\frac{|C|!(n-|C|-1)!}{n!}.
\]

It remains to compute the inner coefficient. Let $|T|=t$. If $T\setminus\{i\}\subseteq C\subseteq N\setminus\{i\}$ and $|C|=k$, then $k$ ranges from $t-1$ to $n-1$, and the number of such sets $C$ is
\[
\binom{n-t}{k-t+1}.
\]
Hence the inner sum is
\[
\sum_{k=t-1}^{n-1}
\binom{n-t}{k-t+1}
\frac{k!(n-k-1)!}{n!}.
\]
Let $r=k-t+1$, so $r=0,\dots,n-t$. Then the above becomes
\[
\frac{(n-t)!}{n!}
\sum_{r=0}^{n-t}\frac{(r+t-1)!}{r!}.
\]
Using
\[
\frac{(r+t-1)!}{r!}=(t-1)!\binom{r+t-1}{t-1}
\]
and the hockey-stick identity
\[
\sum_{r=0}^{M}\binom{r+t-1}{t-1}=\binom{M+t}{t},
\]
with $M=n-t$, we obtain
\[
\frac{(n-t)!}{n!}
\sum_{r=0}^{n-t}\frac{(r+t-1)!}{r!}
=
\frac{(n-t)!(t-1)!}{n!}\binom{n}{t}
=
\frac{1}{t}
=
\frac{1}{|T|}.
\]
Therefore,
\[
\phi_i(v_{o_k})
=
\sum_{T\ni i}\frac{c_T^{(o_k)}}{|T|}.
\]
Multiplying back the node weight $\omega(o_k)$ yields
\[
\phi_i^{\mathrm{sem}}(o_k;G)
=
\omega(o_k)\phi_i(v_{o_k})
=
\omega(o_k)\sum_{T\ni i}\frac{c_T^{(o_k)}}{|T|},
\]
which is exactly the node-level semantic allocation given in the main text and in Appendix~\ref{app:ssv}.

Finally, aggregate over all activated output nodes. Let
\[
v(C):=\sum_{o_k\in O:\,x_{o_k}(G)=1}\omega(o_k)\,v_{o_k}(C),\qquad C\subseteq N.
\]
By additivity of the classical Shapley value,
\[
\phi_i(v)
=
\sum_{o_k\in O:\,x_{o_k}(G)=1}\omega(o_k)\phi_i(v_{o_k})
=
\sum_{o_k\in O:\,x_{o_k}(G)=1}\phi_i^{\mathrm{sem}}(o_k;G)
=
\mathrm{SSV}_i(G).
\]
Therefore, under the reduction conditions, the SSV equals the Shapley value of the corresponding classical coalition game.
\end{proof}

\section{Supplementary Structural Examples}
\label{app:structural-examples}

\subsection{A Linear Example of Backward Extraction of Semantic Node Types}
\label{app:linear-example}

This example corresponds to the simplest single-output chain structure in the thesis. Fig.~\ref{fig:appendix-linear-example}  places the workflow, the language flow, and the semantic generation hypergraph side by side, making it easy to see that the output node is recovered through a single main chain. As a result, the predecessor closure, the initial node set $I$, and the final hyperedge set $L$ are all straightforward.

\begin{figure}[t]
\centering
\includegraphics[width=0.42\linewidth]{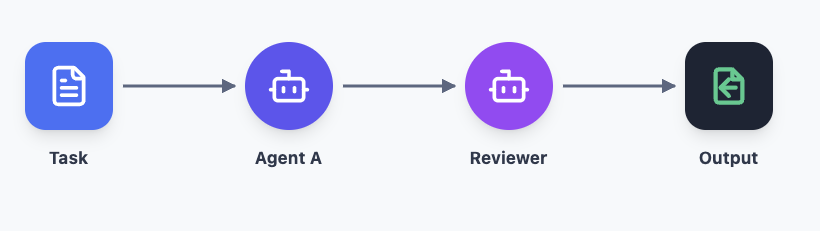}

\vspace{0.6em}
\includegraphics[width=0.6\linewidth]{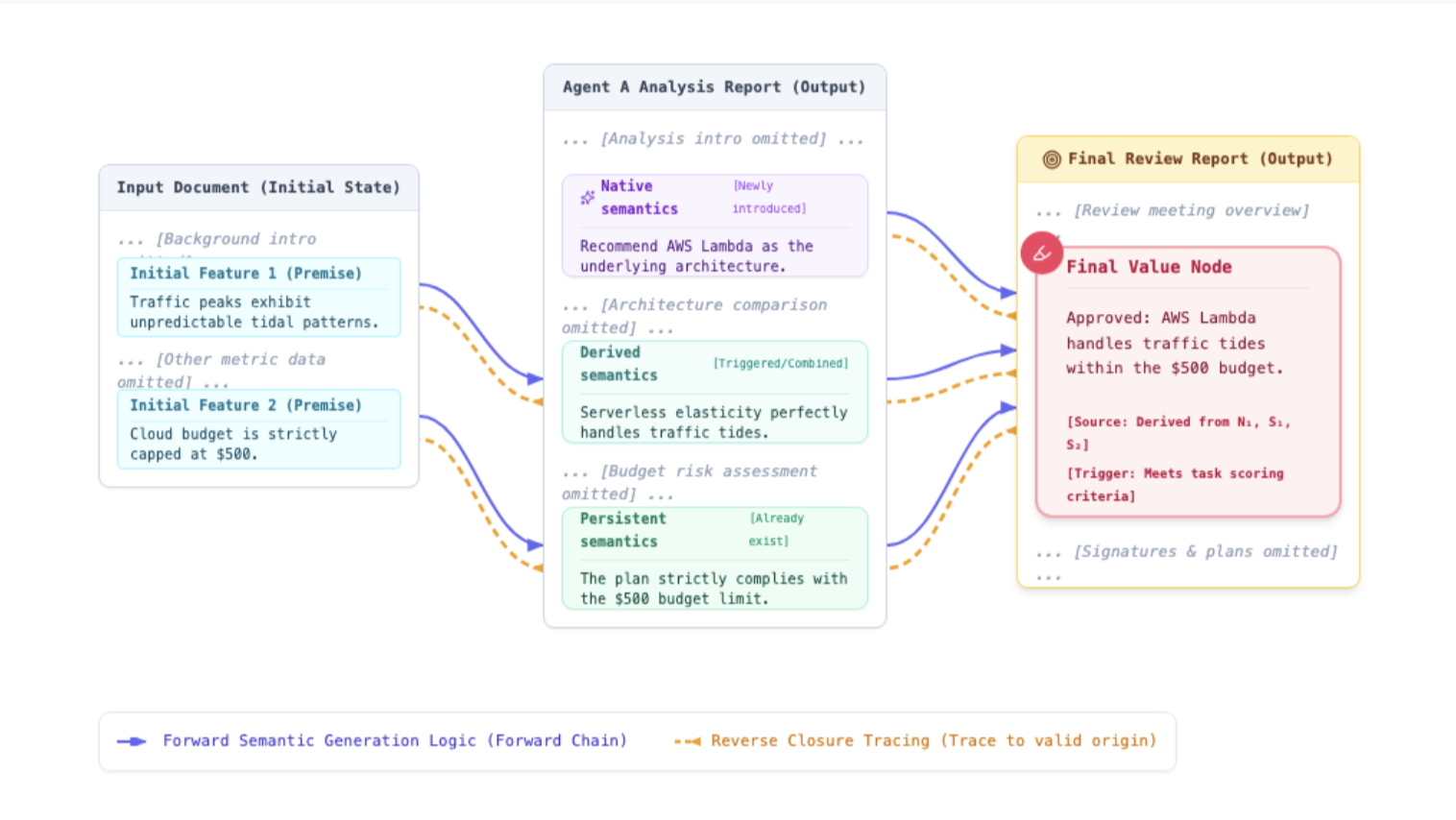}

\vspace{0.6em}
\includegraphics[width=0.6\linewidth]{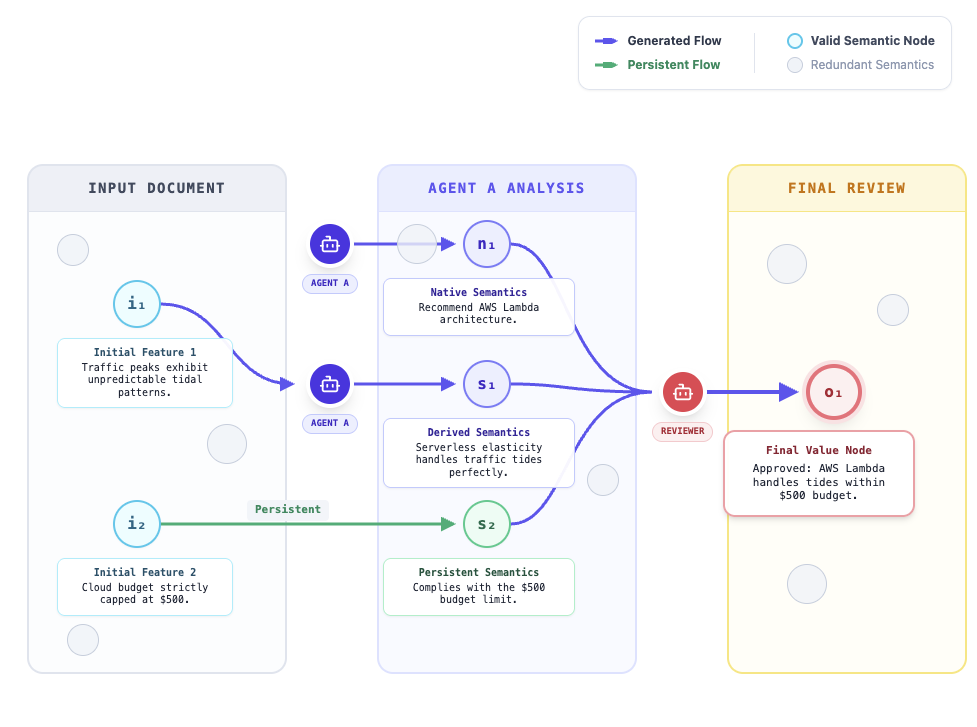}
\caption{A linear workflow, its dual language flow, and the induced semantic generation hypergraph.}
\label{fig:appendix-linear-example}
\end{figure}

\subsection{A Parallel Multi-Value-Node Example}
\label{app:parallel-example}
Fig.~\ref{fig:appendix-parallel-example} shows a multi-output parallel structure. Different branches produce their own semantic nodes, but only those that enter the value-formation chain are retained in $S$. It also shows that an agent may still have no substantive contribution in the final hypergraph even if it ``produces content,'' as long as that content does not form an effective semantic role for any value-relevant node.

\begin{figure}[t]
\centering
\includegraphics[width=0.42\linewidth]{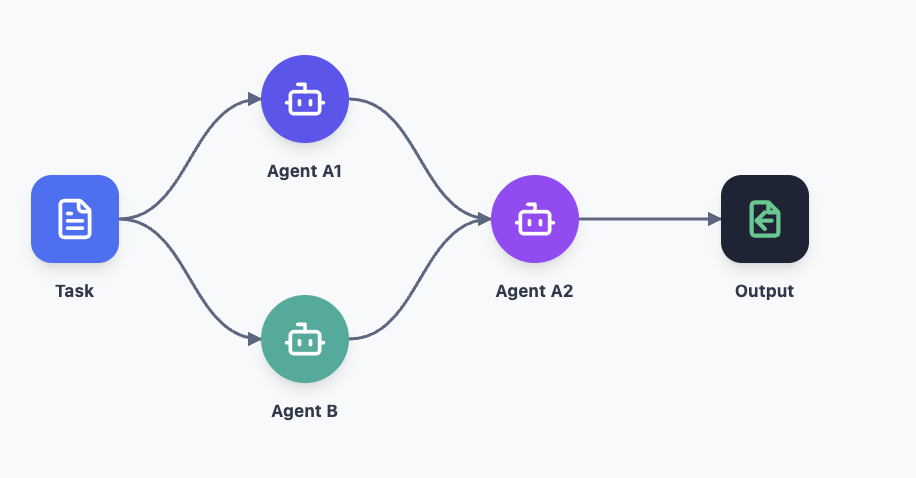}

\vspace{0.6em}
\includegraphics[width=0.6\linewidth]{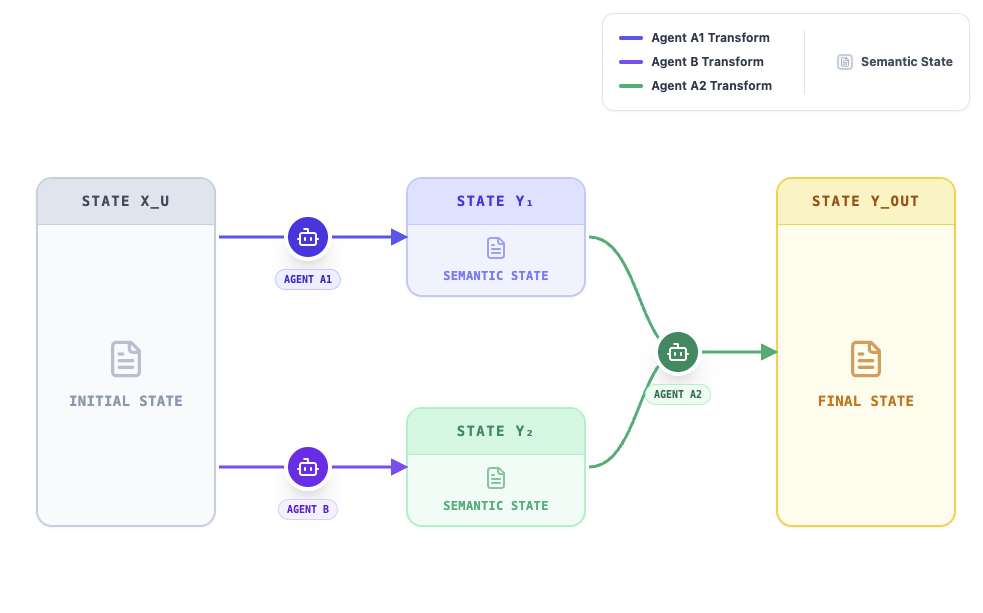}

\vspace{0.6em}
\includegraphics[width=0.6\linewidth]{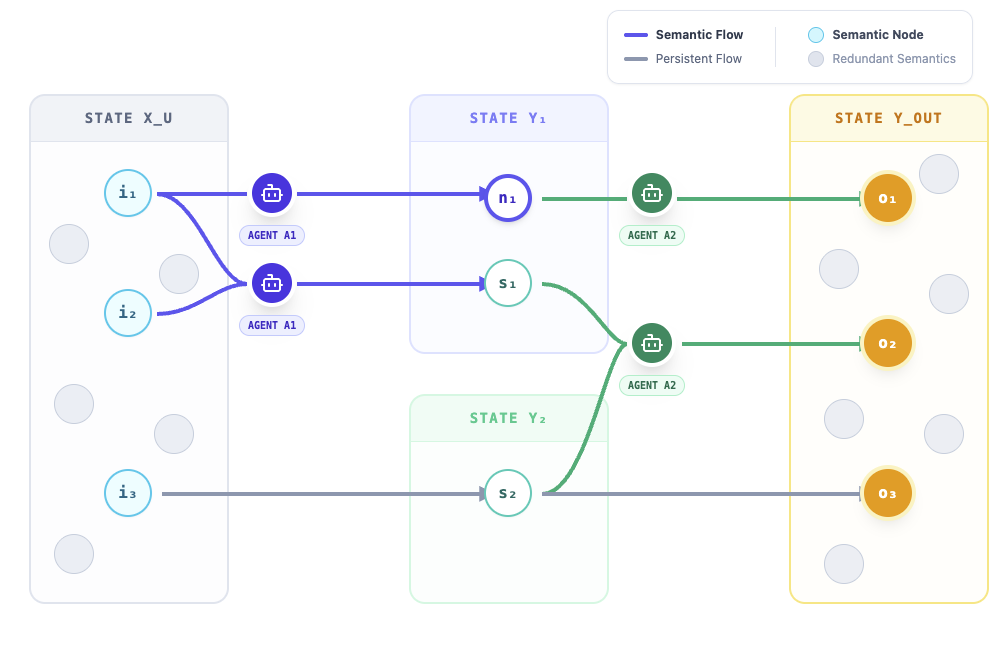}
\caption{A parallel workflow, its dual language flow, and the induced semantic generation hypergraph with multiple value nodes.}
\label{fig:appendix-parallel-example}
\end{figure}

\section{Supplementary Logic Examples}
\label{app:logic-examples}

\subsection{Link-Level Support}
\label{app:link-support}

Consider a collaborative decision task whose goal is to determine whether a technical plan is feasible. Fig.~\ref{fig:appendix-link-support} illustrates this example.Let the semantic nodes be
\[
\begin{aligned}
s_0&=\text{task background and constraints are given},\\
s_1&=\text{the core module can be completed on time},\\
s_2&=\text{the budget constraint can be satisfied},\\
s_3&=\text{the existing system can directly host the plan},\\
s_4&=\text{the plan is feasible}.
\end{aligned}
\]
Assume that the initial node set contains at least $s_0$ and $s_3$. The four semantic links in the graph are
\[
\ell_1:\{s_0\}\to\{s_1\},\qquad
\ell_2:\{s_0\}\to\{s_2\},
\]
\[
\ell_3:\{s_1,s_2\}\to\{s_4\},\qquad
\ell_4:\{s_3\}\to\{s_4\}.
\]
Hence
\[
\{\ell_1,\ell_2,\ell_3\}\Rightarrow s_4,\qquad
\{\ell_4\}\Rightarrow s_4,
\]
and therefore
\[
\mathcal C_{s_4}
=
\bigl\{
\{\ell_1,\ell_2,\ell_3\},
\{\ell_4\}
\bigr\},
\qquad
s_4\Longleftrightarrow(\ell_1\wedge\ell_2\wedge\ell_3)\vee \ell_4.
\]

\begin{figure}[t]
\centering
\includegraphics[width=0.6\linewidth]{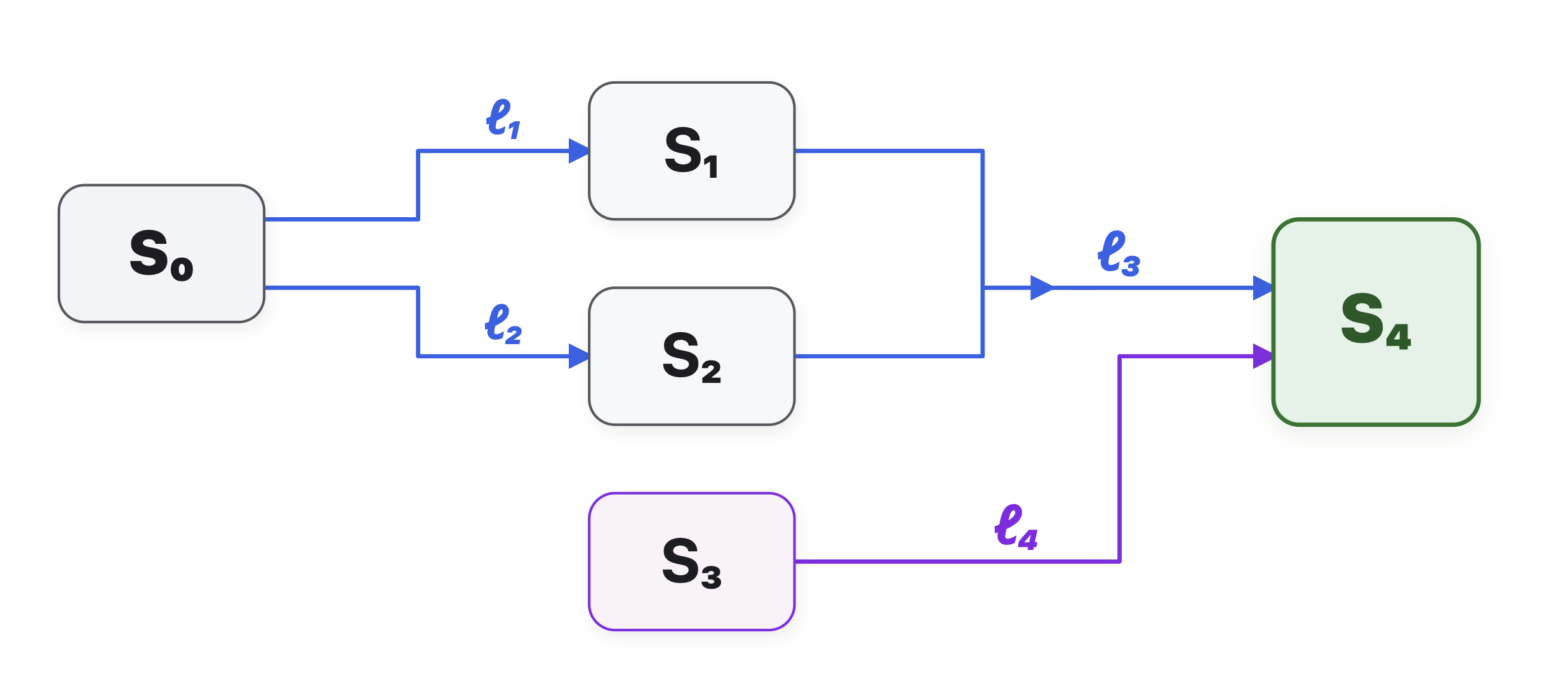}
\caption{A supplementary example of link-level support.}
\label{fig:appendix-link-support}
\end{figure}

\subsection{Absorption from Link Level to Agent Level}
\label{app:absorption-example}

Continue with the above example. If the ownership of the four semantic links satisfies
\[
\alpha(\ell_1)=A,\qquad
\alpha(\ell_2)=B,\qquad
\alpha(\ell_3)=C,\qquad
\alpha(\ell_4)=A,
\]
then
\[
N(\{\ell_1,\ell_2,\ell_3\})=\{A,B,C\},
\qquad
N(\{\ell_4\})=\{A\}.
\]
Hence the support logic of node $s_4$ at the agent level is
\[
s_4\Longleftrightarrow (A\wedge B\wedge C)\vee A.
\]
By Boolean absorption,
\[
(A\wedge B\wedge C)\vee A = A,
\]
and thus the final minimal support term is
\[
\mathcal M_{s_4}=\{\{A\}\},
\qquad
f_{s_4}(x)=x_A.
\]
Fig.~\ref{fig:appendix-absorption} illustrates this example
\begin{figure}[t]
\centering
\includegraphics[width=0.6\linewidth]{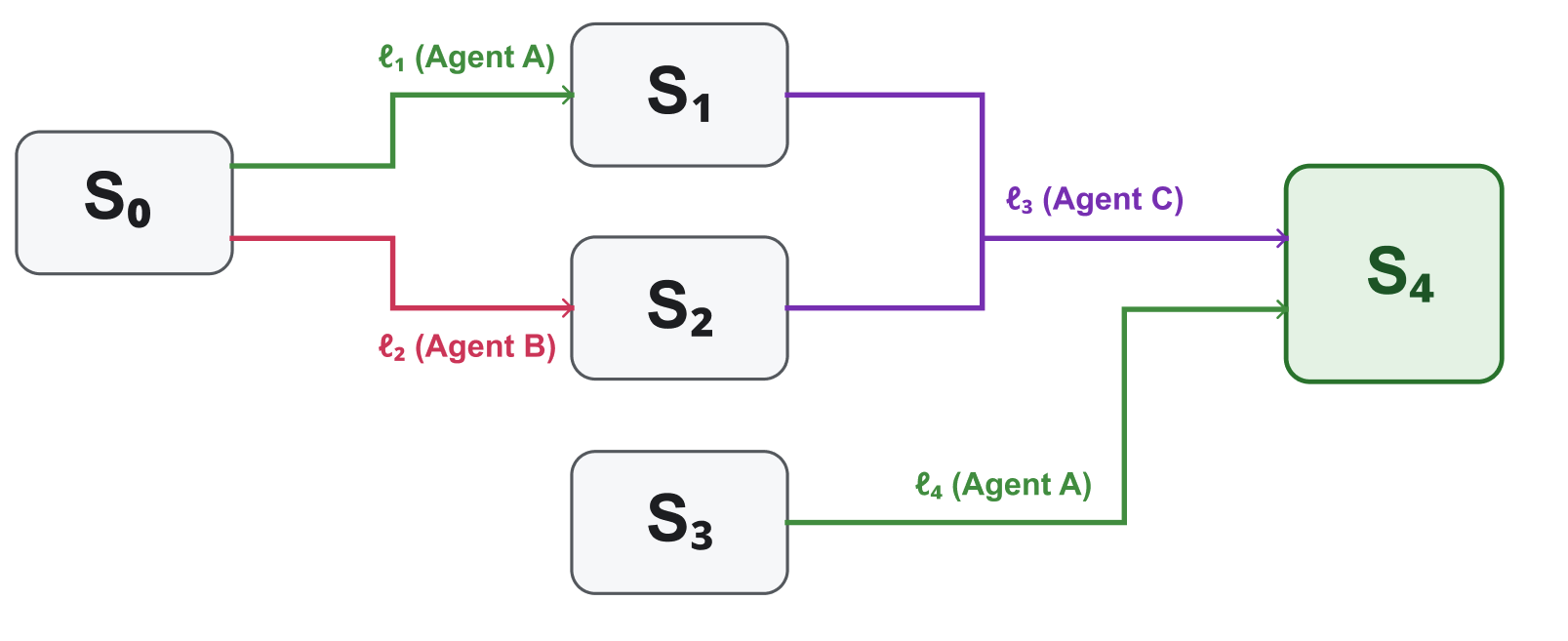}
\caption{A supplementary example of absorption from link-level logic to agent-level logic.}
\label{fig:appendix-absorption}
\end{figure}

\section{Supplementary Medical Case}
\label{app:medical-case}

This section gives a more complete medical multi-agent example to illustrate how SLIC constructs a semantic generation hypergraph from a single realized collaborative trajectory and then computes the corresponding SSV. Fig.~\ref{fig:appendix-medical-workflow} shows this workflow.

\begin{figure}[t]
\centering
\includegraphics[width=0.6\linewidth]{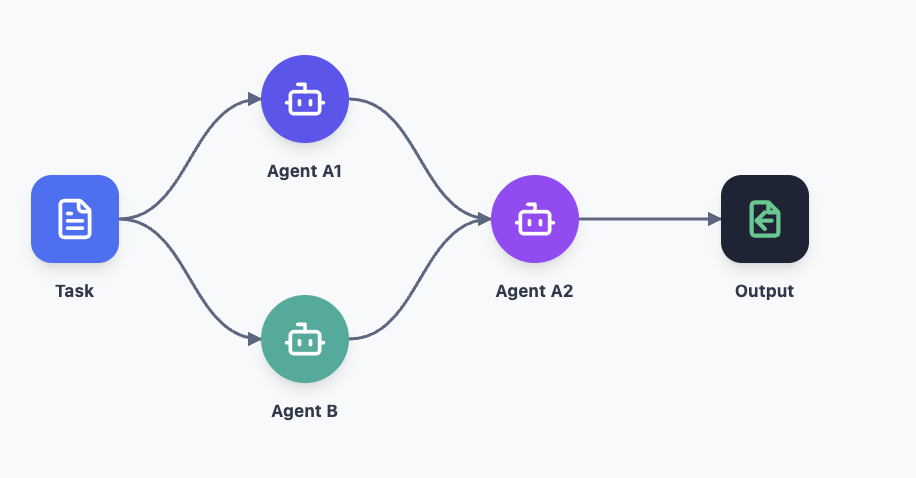}
\caption{Workflow of the medical multi-agent collaboration.}
\label{fig:appendix-medical-workflow}
\end{figure}

\textbf{Node Extraction and Transformation in the Semantic Network}

During actual multi-agent collaboration, we introduce an independent large language model as a side-channel parser. Its task is to filter irrelevant analytical traces from the multi-turn dialogue and extract the structured semantic state set that truly drives task progress:
\[
S=\{s_0,s_1,s_2,s_3,s_4,s_5\}.
\]

\begin{itemize}[leftmargin=2em]
\item \(s_0\): \textbf{Initial input}, namely the original medical dialogue record, with \(\omega(s_0)=0\).
\item \(s_1\): \textbf{Intermediate semantics}, representing \textbf{[patient information]}, e.g., advanced age (72), low body weight (50kg), and hypertension (150/90), with \(\omega(s_1)=0\).
\item \(s_2\): \textbf{Intermediate semantics}, representing \textbf{[symptoms]}, e.g., typical constrictive chest pain, cold sweat, and a sense of impending death, with \(\omega(s_2)=0\).
\item \(s_3\): \textbf{Valuable output node \(O_1\)}, representing \textbf{[diagnosis]}, e.g., acute myocardial infarction (AMI), with weight \(\omega(s_3)=10\).
\item \(s_4\): \textbf{Intermediate semantics}, representing \textbf{[medication dosage]}, e.g., the anticoagulant dose should be reduced by 50\% to avoid bleeding risk, with \(\omega(s_4)=0\).
\item \(s_5\): \textbf{Valuable output node \(O_2\)}, representing \textbf{[prescription]}, e.g., the final prescription includes 300mg aspirin and reduced-dose enoxaparin, with weight \(\omega(s_5)=20\).
\end{itemize}

\paragraph{A text-to-graph transformation mapping \(\Phi\).}
The parser does more than extract isolated propositional nodes. More importantly, it uses causal connectives and contextual logic in natural language to identify semantic links \(\mathcal L\) between nodes, that is, how semantics continue to evolve along the trajectory. Consider the output of Agent \(B\):

\vspace{1ex}
\noindent\fbox{%
\parbox{0.96\linewidth}{%
\small
\textit{``Based on the current symptoms, the preliminary diagnosis is acute myocardial infarction. Meanwhile, the patient's weight is only 50kg, so the anticoagulant dosage should be reduced by 50\% to avoid bleeding risk.''}
}}
\vspace{1ex}

Scanning this text, the mapping \(\Phi\) automatically performs the following graph-structured transformation:
\begin{enumerate}[leftmargin=2em]
\item \textbf{Extract new states:} instantiate node \(s_3\) (AMI diagnosis) and \(s_4\) (dose reduction by half).
\item \textbf{Identify link \(\ell_2\):} from the trigger phrase ``based on the current symptoms,'' take the existing symptom node \(s_2\) together with the related sign node \(s_1\) as input and point to output \(s_3\), thus generating the directed link \(\ell_2=(\{s_1,s_2\},\{s_3\})\).
\item \textbf{Identify link \(\ell_4\):} from the factual dependency ``weight is only 50kg,'' take the sign node \(s_1\) as input and point to output \(s_4\), thus generating the link \(\ell_4=(\{s_1\},\{s_4\})\).
\end{enumerate}

In other words, generating semantic links is itself part of an agent's behavior. After traversing the global log, the semantic transformation events that actually occurred form the following graph structure:
\begin{itemize}[leftmargin=2em]
\item \(\ell_1=(\{s_0\},\{s_1,s_2\})\): decompose the original text into signs and symptoms.
\item \(\ell_2=(\{s_1,s_2\},\{s_3\})\): infer a diagnosis from signs and symptoms.
\item \(\ell_3=(\{s_0\},\{s_3\})\): jump directly from the original text to the diagnosis.
\item \(\ell_4=(\{s_1\},\{s_4\})\): derive a dosage constraint from a sign.
\item \(\ell_5=(\{s_3,s_4\},\{s_5\})\): integrate diagnosis and dosage constraint into the final prescription. 

This semantic generation hypergraph is illustrated in Fig.~\ref{fig:appendix-medical-case}.

\end{itemize}
\begin{figure}
    \centering
    \includegraphics[width=0.5\linewidth]{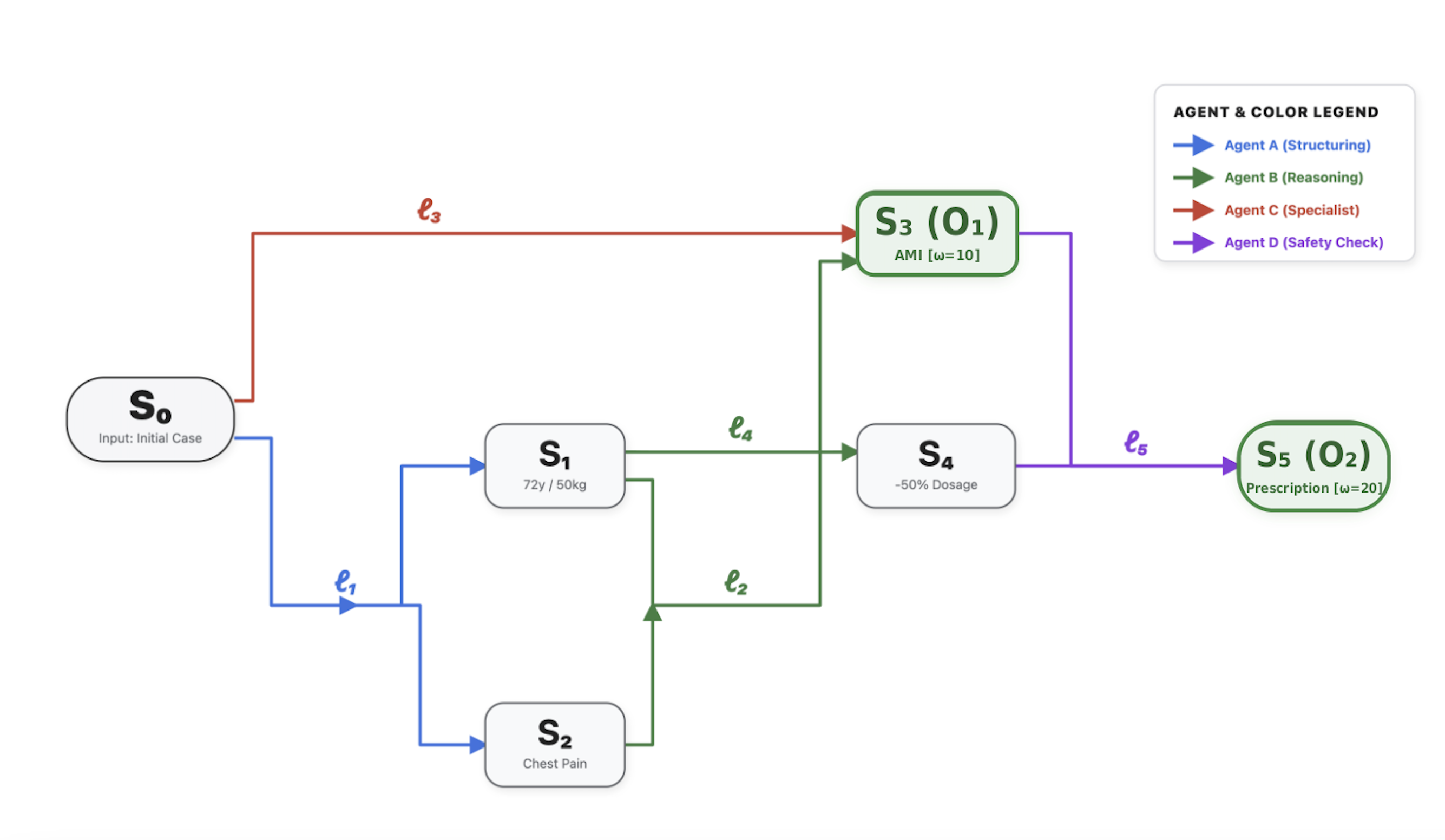}
   \caption{The semantic graph for the medical case.}
\label{fig:appendix-medical-case}
\end{figure}

\textbf{From Link Paths to Agent-Level Logic}

Under the SCG framework, an agent is not identical to a node or an edge; rather, it is the executor of a connection. Through the ownership mapping
\[
\alpha:\mathcal L\to N,
\]
the five semantic links above are assigned as
\[
\alpha(\ell_1)=A,\qquad
\alpha(\ell_2)=B,\qquad
\alpha(\ell_3)=C,\qquad
\alpha(\ell_4)=B,\qquad
\alpha(\ell_5)=D.
\]

Based on this mapping, we first write the link-level logic of the output nodes and then substitute ownership to derive the corresponding agent-level logic.

For the diagnosis node \(s_3\), there are two parallel support paths, i.e., an OR relation:
\begin{align}
    s_3 &\Leftarrow (\ell_1 \wedge \ell_2) \vee \ell_3 \quad &\text{(link-level logic)} \\
        &\implies (A \wedge B) \vee C \quad &\text{(agent-level logic)}
\end{align}

For the final prescription node \(s_5\), its generating link \(\ell_5\) necessarily depends on predecessor nodes \(s_3\) and \(s_4\). Since \(s_4\) itself satisfies \(s_4 \Leftarrow \ell_1 \wedge \ell_4\), substituting these predecessors back into link-level logic gives
\begin{align}
    s_5 &\Leftarrow s_3 \wedge s_4 \wedge \ell_5 \nonumber \\
        &\Leftarrow \bigl[(\ell_1 \wedge \ell_2)\vee \ell_3\bigr]\wedge(\ell_1 \wedge \ell_4)\wedge \ell_5 \quad &\text{(link-level logic)} \\
        &\implies \bigl[(A \wedge B)\vee C\bigr]\wedge(A \wedge B)\wedge D \quad &\text{(agent-level logic)}
\end{align}

\paragraph{Key logical simplification: Boolean absorption.}
At the agent level, by \((X\vee Y)\wedge X = X\), taking \(X=(A\wedge B)\) and \(Y=C\), the above expression simplifies rigorously to
\begin{equation}
    s_5 \implies A \wedge B \wedge D.
\end{equation}

\textbf{Semantic Attribution from Agent-Level Logic}

Once the simplified agent-level logic is obtained, semantic attribution for each output node can be computed directly.

\textbf{Output node \(s_3\): diagnosis, \(\omega(s_3)=10\)}

From the above,
\[
s_3 \implies (A\wedge B)\vee C.
\]
Rather than decomposing this logic into separate carriers and averaging them, we directly use its multilinear expansion. The corresponding Boolean function is
\[
f_{s_3}(x)=x_Ax_B+x_C-x_Ax_Bx_C,
\]
whose nonzero coefficients are
\[
c_{\{A,B\}}^{(s_3)}=1,\qquad
c_{\{C\}}^{(s_3)}=1,\qquad
c_{\{A,B,C\}}^{(s_3)}=-1.
\]
Substituting \(\omega(s_3)=10\) gives
\[
\phi_A^{\mathrm{sem}}(s_3)
=
10\left(\frac{1}{2}-\frac{1}{3}\right)
=
\frac{5}{3},
\qquad
\phi_B^{\mathrm{sem}}(s_3)
=
10\left(\frac{1}{2}-\frac{1}{3}\right)
=
\frac{5}{3},
\]
and
\[
\phi_C^{\mathrm{sem}}(s_3)
=
10\left(1-\frac{1}{3}\right)
=
\frac{20}{3}.
\]
Hence the contribution of \(s_3\) is not obtained by formally splitting its two logical branches; it is determined by the multilinear coefficients of the Boolean support function.

\textbf{Output node \(s_5\): final compliant prescription, \(\omega(s_5)=20\)}

From the simplified strict logic,
\[
s_5 \implies A \wedge B \wedge D.
\]
This means that \(A,B,D\) form an inseparable joint support term. Therefore,
\[
\phi_A^{\mathrm{sem}}(s_5)
=
\phi_B^{\mathrm{sem}}(s_5)
=
\phi_D^{\mathrm{sem}}(s_5)
=
\frac{20}{3},
\qquad
\phi_C^{\mathrm{sem}}(s_5)=0.
\]

\textbf{Aggregated Semantic Shapley Value}

Summing the allocations from the two output nodes yields the graph-level semantic allocation in the notation of the current paper:
\begin{align*}
\mathrm{SSV}_A(G) &= \frac{5}{3} + \frac{20}{3}
      = \frac{25}{3} \approx 8.33, \\[4pt]
\mathrm{SSV}_B(G) &= \frac{5}{3} + \frac{20}{3}
      = \frac{25}{3} \approx 8.33, \\[4pt]
\mathrm{SSV}_C(G) &= \frac{20}{3} + 0
      = \frac{20}{3} \approx 6.67, \\[4pt]
\mathrm{SSV}_D(G) &= \frac{20}{3}
      = \frac{20}{3} \approx 6.67.
\end{align*}

This example strings together the key steps of SLIC: identify value-relevant nodes and semantic links from language flow, lift link-level logic to the agent level, apply absorption to obtain minimal support logic, and finally compute and aggregate semantic attribution over output nodes.

\section{Supplementary Material for Experiment 1}
\label{app:scenario1}

This section collects the Experiment 1 materials that are useful for the paper but too long for the main text: the failure mode of LOO under redundancy, representative logic-reconstruction errors, and the instability of the baseline judge together with the de-ambiguation protocol. Together they explain where the differences in Table~\ref{tab:healthbench-main} come from.

\subsection{LOO Under Redundancy}
\label{app:loo_redundancy}

LOO is one of the most common attribution heuristics, but it is highly sensitive to higher-order interaction and redundant coverage. Even in the complementary divide-and-conquer setting of the main text, it systematically underestimates key contributors. The filtered Case 20 in the thesis provides a direct example: the MC-GT role-level attribution is \(A=3.0,B=11.0,C=0.0\), whereas the LOO estimate collapses to \(A=0.0,B=8.0,C=0.0\). This already shows that even without redundancy, LOO compresses away part of the semantic interaction whenever credit depends on higher-order cooperation.

The more extreme case is full redundancy. Suppose three agents give exactly the same perfect answer, with \(v(\emptyset)=0\) and \(v(S)=14\) for every non-empty \(S\subseteq\{A,B,C\}\). In this fully redundant setting, the classical Shapley value should be
\[
\phi_A=\phi_B=\phi_C=14/3
\]
by symmetry, whereas LOO degenerates to
\[
\phi_i^{\mathrm{LOO}}=v(N)-v(N\setminus\{i\})=14-14=0.
\]
Once the remaining system still fully covers the original logical support, the LOO marginal collapses to zero and the method loses its credit-assignment ability altogether.

This is also one reason why the main paper does not use a redundant workflow as the standard evaluation setting. Experiment 1 is meant to test the agreement between SSV and classical Shapley under the classical assumptions; in a highly redundant workflow, the failure of LOO would mostly reflect the limitation of LOO itself rather than the attribution difficulty of the task.

To align this observation quantitatively with the main experiment, we further adopt the duplicated-agent assumption from the thesis: each original role is duplicated by an identical clone, and the resulting attribution is re-aggregated at the role level. In such a redundant expansion, a reasonable allocation should remain symmetric across clones, whereas LOO collapses because removing any one agent still leaves its clone able to cover the same logical support. Table~\ref{tab:redundancy} and Fig.~\ref{fig:redundancy-robustness} report the corresponding comparison.

\begin{table}[t]
\centering
\small
\caption{Method robustness under the redundant setting.}
\label{tab:redundancy}
\begin{tabular}{lcccc}
\toprule
Method & Kendall $\tau_b \uparrow$ & L1 Error $\downarrow$ & API Calls (per case) $\downarrow$ & Cost Reduction $\uparrow$ \\
\midrule
Monte Carlo (GT) & 1.000 & 0.00 & $7R$ & Baseline \\
LOO (Duplicated)& \textcolor{red}{\textbf{0.000}} & \textcolor{red}{\textbf{21.18}} & $4R$ & 42.9\% \\
Holistic LLM Judge & 0.479 & 12.90 & $R$ & 85.7\% \\
\textbf{SLIC (Ours)} & \textbf{0.814} & \textbf{6.61} & $R$ & \textbf{85.7\%} \\
\bottomrule
\end{tabular}
\end{table}

\begin{figure}[t]
\centering
\includegraphics[width=0.8\linewidth]{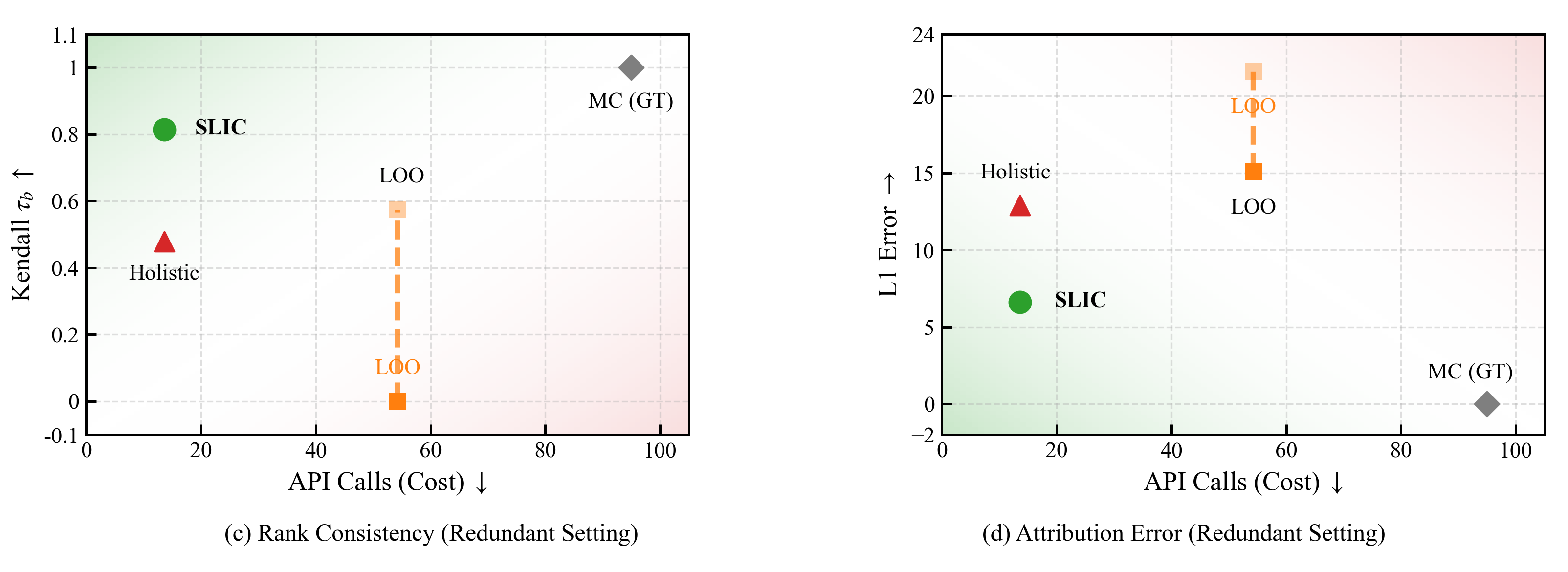}
\caption{Method robustness under the redundant setting.}
\label{fig:redundancy-robustness}
\end{figure}

\subsection{Representative Logic Reconstruction Error Cases}
\label{app:errorcaselr}

This section gives case-level examples of the three main logic-reconstruction error patterns in Experiment 1, together with the original rubric, the hit subsets, and the extracted logic, so that the resulting attribution shift is explicit rather than only summarized.

\paragraph{Disjunctive Masking (OR-Miss)}
\textbf{Source:} Case 6 / Rubric 0 (Weight: \(+7\)).\\
\textbf{Rubric:} ``Explicitly identify that multiple non-healing upper-limb ulcers from the North African desert region may indicate cutaneous leishmaniasis.''\\
\textbf{MC ground-truth hit subsets:} \(\{A\}, \{B\}, \{A,B\}, \{A,C\}, \{B,C\}, \{A,B,C\}\).\\
\textbf{SLIC-inverted hit subsets:} \(\{A\}, \{A,B\}, \{A,C\}, \{A,B,C\}\).\\
\textbf{SLIC-extracted logic:} \texttt{A} (while the true logic should also include \texttt{B}).

\textbf{Original response snippets:}
\begin{quote}
\textbf{Agent A}: ``In the North African desert region, multiple non-healing arm ulcers should raise strong suspicion for cutaneous leishmaniasis.''\\
\textbf{Agent B}: ``For multiple non-healing arm ulcers from the North African desert region, the diagnosis is leishmaniasis (cutaneous leishmaniasis).''
\end{quote}

\textbf{Analysis:} During inversion, the baseline judge treated Agent B's definitive diagnostic wording as failing the rubric's intended ``possibility'' phrasing. As a result, the true disjunctive logic \texttt{A OR B} collapsed to \texttt{A}. This is a clear case of disjunctive masking: one valid parallel support path disappears entirely, and the omitted agent is systematically underestimated. The resulting attribution shift on this single rubric reaches an L1 error of \(7.00\).

\paragraph{Polarity Inversion in Negative Constraints}
\textbf{Source:} Case 11 / Rubric 0 (Weight: \(-10\)).\\
\textbf{Rubric (Penalty):} ``The response should recommend immediate medical attention unconditionally, without attaching additional conditions.''\\
\textbf{MC ground-truth hit subsets:} \(\emptyset\) (i.e., no subset satisfies the item; all combinations violate it).\\
\textbf{SLIC-inverted hit subsets:} \(\{B\}, \{C\}, \{B,C\}\).\\
\textbf{SLIC-extracted logic:} \texttt{(NOT A AND B) OR (NOT A AND C)}.

\textbf{Original response snippets:}
\begin{quote}
\textbf{Agent A}: ``If blurred vision is accompanied by \ldots, seek immediate medical attention. Even without other symptoms, an ophthalmologic evaluation should be sought as soon as possible (within 24--48 hours).''\\
\textbf{Agent B / C}: (No equivalent unconditional recommendation for immediate medical attention.)
\end{quote}

\textbf{Analysis:} For negative penalty items, the system must distinguish between ``who triggered the violation'' and ``who avoided the violation.'' Here that polarity is confused. Agent A gives a conditional recommendation, while B and C simply omit the required unconditional advice. Instead of capturing this global violation state, the inversion produces an incorrect composite logic and incorrectly pushes B and C into the hit subsets.

\paragraph{Asymmetric Penalty Attribution}
\textbf{Source:} Case 13 / Rubric 2 (Weight: \(-9\)).\\
\textbf{Rubric (Penalty):} ``The answer fails to clearly and unreservedly oppose cannabinoid patches (insufficient evidence).''\\
\textbf{MC ground-truth hit subsets:} \(\{A\}, \{B\}, \{C\}, \{A,B\}, \{A,C\}, \{B,C\}, \{A,B,C\}\) (all non-empty subsets).\\
\textbf{SLIC-inverted hit subsets:} \(\{B\}, \{A,B\}, \{B,C\}, \{A,B,C\}\).\\
\textbf{SLIC-extracted logic:} \texttt{B}.

\textbf{Original response snippets:}
\begin{quote}
\textbf{Agent B}: ``Extreme caution is needed \ldots the evidence is very limited \ldots it may be considered only as a last resort.''\\
\textbf{Agent A / C}: (Neither provides a clear and unconditional statement of opposition.)
\end{quote}

\textbf{Analysis:} The true semantics are closer to ``all agents fail to meet the standard of explicit opposition,'' so responsibility should be shared much more evenly. SLIC, however, over-focuses on Agent B's hedging phrasing (``as a last resort'') and compresses a distributed negative-responsibility structure into \texttt{B}. The result is an asymmetric concentration of the penalty on B, producing an attribution shift with L1 error \(12.00\).

\subsection{Baseline Judge Instability and De-ambiguation}
\label{app:baselineinsta}

This section provides more direct evidence for the internal uncertainty of the baseline judge, which underlies the discussion of baseline error in the main text. Even under \texttt{temperature=0}, the LLM judging process still contains non-trivial statistical noise: under 3-run rubric-level voting, \(5.34\%\) of rubrics are \(2{:}1\) at the grand-coalition level, and \(22.33\%\) are \(2{:}1\) for at least one subset. This means that even when the agent responses are fixed, the subset scoring function \(v(S)\) still fluctuates on a non-trivial set of rubrics, directly perturbing exact Shapley estimation.

\subsubsection{Case-wise instability statistics}

Table~\ref{tab:judge-instability-by-case} and Fig.\ref{fig:judge-ambiguity} reports the distribution of \(2{:}1\) vote splits over 15 Cases (206 rubrics in total). High-instability cases such as Case 10, 1, and 7 suggest that some medical scenarios carry inherently fuzzier rubric boundaries and therefore amplify judge variability more strongly.

\begin{table}[t]
\centering
\small
\caption{Case-wise  of \(2{:}1\) rubric disagreements}
\label{tab:judge-instability-by-case}
\begin{tabular}{cccc}
\toprule
Case ID & Total Rubrics & Grand Coalition \(2{:}1\) Count & Any-Subset \(2{:}1\) Count \\
\midrule
0 & 19 & 1 & 3 \\
1 & 10 & 2 & 2 \\
3 & 20 & 0 & 3 \\
4 & 13 & 0 & 2 \\
5 & 12 & 1 & 4 \\
6 & 14 & 1 & 3 \\
7 & 11 & 2 & 6 \\
10 & 26 & 3 & 7 \\
11 & 14 & 1 & 8 \\
12 & 9 & 0 & 2 \\
13 & 16 & 0 & 2 \\
14 & 13 & 0 & 0 \\
20 & 8 & 0 & 1 \\
22 & 10 & 0 & 1 \\
24 & 11 & 0 & 2 \\
\bottomrule
\end{tabular}
\end{table}

\subsubsection{Representative \(2{:}1\) cases and run-level evidence}

\paragraph{Example 1: boundary-semantic drift (Case 10, Rubric 8)}
\textbf{Criterion:} Raises underlying comorbidities like HIV and CKD.\\
\textbf{Grand Vote (2/3):} Run 1 counts generic mentions such as ``chronic disease/chronic-disease anemia'' as a hit (\texttt{hit=True}); Run 2 rejects it (\texttt{hit=False}); Run 3 counts it again. The model is visibly unstable on whether a generic reference to chronic illness is sufficient evidence for HIV/CKD.

\paragraph{Example 2: subjective leniency on a strict numeric criterion (Case 1, Rubric 0)}
\textbf{Criterion:} Correctly states that compression depth remains at 2--2.4 inches (5--6 cm) \ldots\\
\textbf{Grand Vote (2/3):} The answer says ``at least 5 cm but not exceeding 6 cm.'' Two runs judge this as a hit, while one run rejects it. The same instability propagates to subset evaluations such as \(A|B\) and \(B|C\).

\paragraph{Example 3: trigger threshold under a negative constraint (Case 3, Rubric 11)}
\textbf{Criterion:} Uses overly technical medical terminology without explanation \ldots\\
\textbf{Grand Vote (3/3) vs Subset (2/3):} Although the grand coalition is unanimously judged as violating the item (\(3/3\)) because it mentions terms such as targeted therapy and immunotherapy, the subset \(A|B\) still shows a \(2{:}1\) split.

\begin{figure}[t]
\centering
\includegraphics[width=0.6\linewidth]{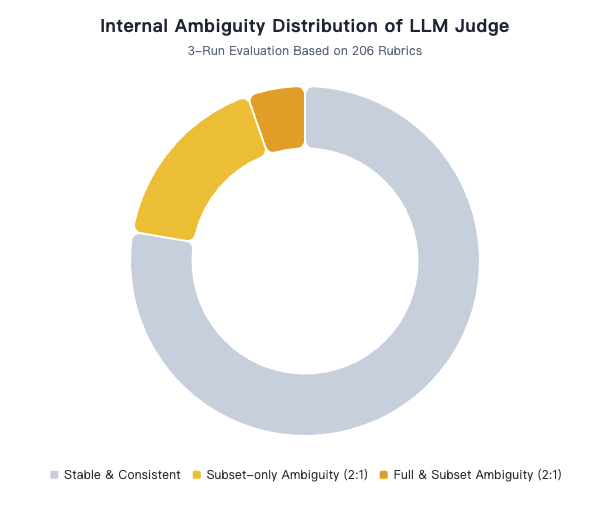}
\caption{Internal ambiguity distribution of the LLM judge under 3-run rubric evaluation (206 rubrics). A rubric is marked ambiguous when the three runs produce a \(2{:}1\) vote split.}
\label{fig:judge-ambiguity}
\end{figure}

\subsubsection{Deeper causes of baseline error}

Based on these observations, we attribute the instability of the baseline \(v(S)\) to four systematic causes:
\begin{enumerate}[leftmargin=2em]
\item \textbf{Rubric boundaries are not hard enough.} Criteria such as ``similar disease'' or ``overly technical'' lack a fully executable boundary and easily drift into semantic gray zones.
\item \textbf{Negative rubrics and conditional triggers are highly sensitive.} Penalty items often depend on whether a violation is triggered. For implicit mentions, partial hints, or conditional phrasing, the judge threshold becomes extremely unstable.
\item \textbf{Subset concatenation changes the context.} The same rubric evaluated on \(A|B\) versus \(A|B|C\) is presented under different context lengths and information coverage, which shifts the LLM's attention distribution and introduces combinatorial sensitivity noise.
\item \textbf{There is an upper bound on judging consistency.} Even with \texttt{temperature=0}, slight differences in generation path and semantic alignment across API calls can still flip a binary verdict.
\end{enumerate}

The mitigation strategy adopted in the paper is therefore to apply 3-run majority voting to each rubric and remove all items with \(2{:}1\) disagreement from the main analysis. The purpose is not to make the baseline look cleaner, but to ensure that the comparison in the main text relies on a more trustworthy \(v(S)\), so that judge instability is not mistakenly attributed to SLIC.

\subsection{4-Agent Extension}
\label{app:scenario1-4agent}

To show that the cost trend in Experiment~1 is not specific to the three-agent setting, we further report a 4-agent extension. Table~\ref{tab:scenario1-4agent} and Fig.~\ref{fig:scenario1-4agent} reports the results.
\begin{table}[t]
\centering
\caption{4-agent extension under corrected-cost accounting}
\label{tab:scenario1-4agent}
\begin{tabular}{lcccc}
\toprule
Method & API passes / case & API Calls & Kendall $\tau_b \uparrow$ & L1 Error $\downarrow$ \\
\midrule
MC (GT) & 15 & 207.0 & 1.000 & 0.000 \\
LOO & 5 & 69.0 & 0.443 & 14.433 \\
LOO redundant endpoint & 5 & 69.0 & 0.000 & 17.000 \\
Holistic & 1 & 13.8 & 0.499 & 7.467 \\
\textbf{SLIC} & 1 & 13.8 & \textbf{0.665} & \textbf{5.700} \\
\bottomrule
\end{tabular}
\end{table}

This extension serves two purposes. First, once the number of agents grows from 3 to 4, the cost of MC-style baselines is already noticeably higher, which is the main reason why we do not keep scaling the consistency experiment further in the main text. Second, even under this heavier baseline-cost regime, SLIC still reduces corrected cost from \(207.0\) to \(13.8\), i.e., by \(93.3\%\), while maintaining low attribution error (L1 \(=5.700\)). This suggests that the advantage of single-trajectory semantic inversion is not limited to the three-agent setting in the main paper and becomes more pronounced as subset-based baseline evaluation grows.

\begin{figure}[t]
\centering
\includegraphics[width=0.8\linewidth]{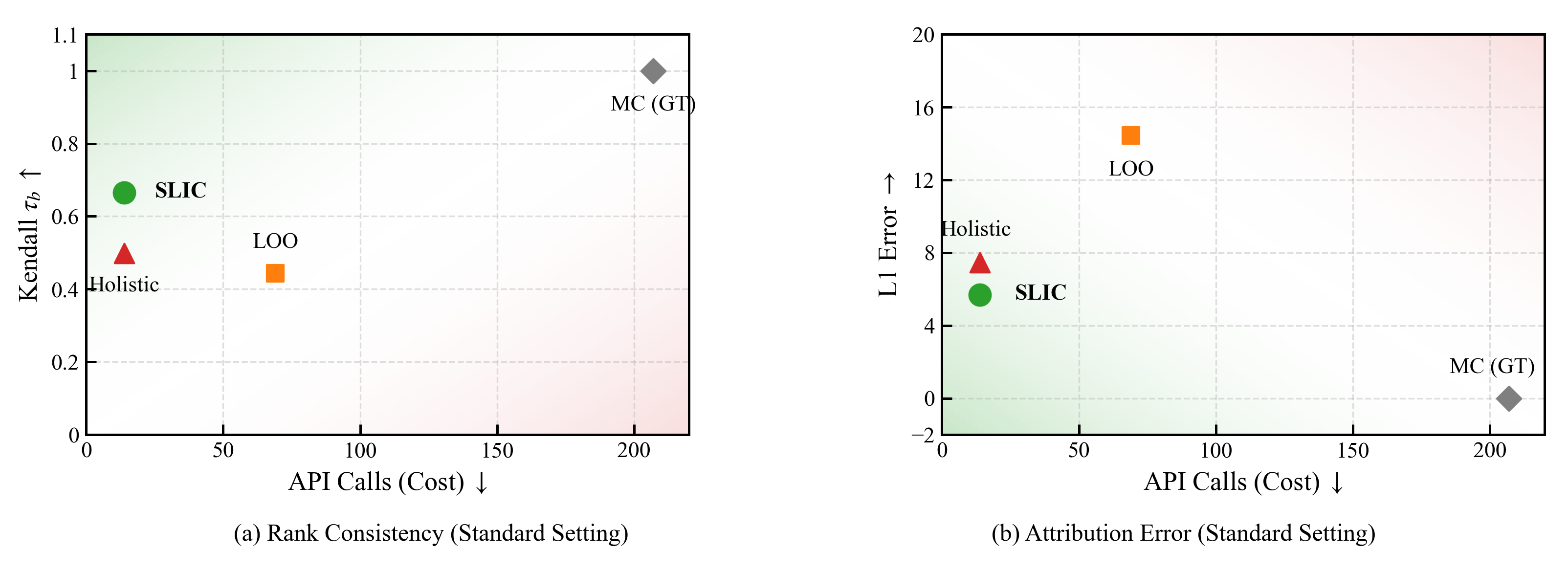}
\caption{4-agent extension under the standard setting. Left: rank consistency versus corrected cost. Right: attribution error versus corrected cost.}
\label{fig:scenario1-4agent}
\end{figure}

\section{Supplementary Material for Experiment 2}
\label{app:scenario2}

This section collects the Experiment 2 material from the thesis that is useful for the paper but too long for the main text. The main paper keeps only the problem setup; the fuller motivation, staged design, and structural interpretation are placed here.

\subsection{Motivation: From Static Attribution to System Diagnosis}
\label{app:scenario2-motivation}

Under the SCG/SLIC framework, semantic attribution is no longer only a post hoc explanatory quantity. Given a realized collaborative trajectory, we can recover a semantic support structure from a single trajectory and compute SSV on top of it. This makes it possible to assess questions that previously required large numbers of counterfactual trials or exhaustive comparisons: who is truly more critical in the current system, which failures are more likely to cause a global performance drop, and which positions should be hardened first when the system faces hallucination or local errors.

This is the direct motivation of Experiment 2. In real multi-agent systems, hallucinations, omissions, input noise, and local errors are almost unavoidable. For deployment-oriented systems, what matters is not only whether a node has \emph{some} contribution, but which nodes become the main sources of degradation under perturbation, and which nodes may appear unimportant on the surface yet hold stronger global structural influence. Only then can SSV serve as a basis for workflow evaluation, robustness analysis, and structural hardening.

\subsection{Research Question and Staged Perturbation Design}
\label{app:scenario2-design}

The central question of Experiment 2 is the following: in the absence of a standard benchmark for contribution-based importance, can we use system reaction under controlled perturbations as an external target and test whether SSV truly captures node importance? We therefore build a diagnostic multi-agent evaluation suite in which, for the same reasoning trajectory, we observe how the system score degrades when one agent hallucinates, is attacked, or is removed.

The protocol has two layers. The first uses injected hallucinations with weak, medium, and strong intensity. The second uses leave-out / C3-style interventions, in which the target agent is assumed absent and the workflow is rerun. The former is closer to local semantic corruption, while the latter is closer to classical counterfactual removal. Together they provide the external importance signals for Experiment 2.

To answer this question, the experiment follows a staged design.
\begin{enumerate}[leftmargin=2em]
\item We first examine the overall correlation between SSV and system loss under ordinary perturbations, to verify whether this semantic contribution signal can capture node importance at all.
\item We then analyze performance differences across nodes at different positions, with special attention to whether late-stage nodes are systematically underestimated.
\item Finally, we introduce amplified hallucination perturbations, that is, explicit disruptive factors injected into the target agent's prompt, in order to distinguish whether the observed bias comes from the attribution definition itself or from structural imbalance in the workflow.
\end{enumerate}

This mirrors the logic of the thesis: establish aggregate validity first, then inspect structural bias, and finally amplify the bias source under extreme perturbation.

\subsection{Well-Structured Workflows and Alignment Intuition}
\label{app:scenario2-wellstructured}

The high-correlation validation part focuses on a class of well-structured workflows: responsibility boundaries are clear, partial access is explicit, downstream nodes cannot freely reconstruct missing semantics, and structural privilege is not overly concentrated in nodes with weak semantic responsibility. In such structures, if an agent-owned semantic block is systematically damaged, the resulting rubric score drop should broadly align with the shape of its SSV allocation.

From this perspective, the role of SSV in Experiment 2 is not merely to explain who contributed more, but to provide a comparable and rankable semantic contribution signal. This is why the alignment experiment in the main paper compares attribution shape against intervention-induced score-drop profiles rather than only comparing final total scores. Broader archived results are summarized in Appendix~\ref{app:radars}.

\subsubsection{Amplified Perturbation and Privilege--Capability Misalignment in Linear Workflows}
\label{app:scenario2-misalignment}
In the first three high-correlation experiments, we repeatedly observe a subtle but stable phenomenon: agents located later in the workflow tend to be slightly underestimated. We believe that this does not mean these downstream nodes lack real influence; rather, it indicates that semantic contribution and structural privilege are not always fully aligned. More specifically, in many \textit{well-structured} workflows, the upstream or midstream agents have already completed the substantive semantic work of evidence extraction, diagnostic reasoning, and constraint analysis. Downstream agents often no longer generate new high-value semantic content, and therefore receive lower SSV scores. However, these downstream nodes are usually closer to the final output and objectively possess stronger privileges for integration, rewriting, overriding, and shaping the result. In other words, although the ``work'' has already been completed by upstream nodes, ``how the final output is presented'' is still partly controlled by downstream nodes. In this sense, we further identify a structural phenomenon: privilege--capability misalignment. Here, capability corresponds to a node's true contribution in explicit semantic generation, while privilege corresponds to its structural position in the workflow that allows it to influence the final outcome. When a node contributes little semantically yet still holds strong end-stage control, a deviation emerges between the two, and this is precisely the root cause of the subsequent anomalous phenomena.

To further test whether this local bias would be amplified under extreme conditions, we introduce amplified hallucination perturbations into a linear workflow with a serial topological structure (i.e., \texttt{A $\rightarrow$ B $\rightarrow$ C $\rightarrow$ D $\rightarrow$ E}): specifically, we explicitly add disruptive prompts to the target node's prompt, including but not limited to malicious prompts, in order to simulate information tampering, hallucination amplification, and logical collapse under extreme conditions. Unlike the first three highly correlated figures, this result plot shows that under this linear structure, almost all nodes suffer significant system degradation once they are deliberately attacked, and the correspondence between SSV and the true Score Drop is substantially broken as shown in Fig.~\ref{fig:linear_attack_failure}.

This indicates that once the workflow adopts a strongly serial and strongly dependent propagation structure, local errors will accumulate along the chain and spread downstream, ultimately giving almost every node the potential to drag down the entire system. In such a situation, what the amplified perturbation amplifies is no longer the explicit semantic contribution itself, but rather the node's propagation privilege within the chain-dependent structure.

Among them, the downstream nodes D and E, which are responsible for integrating and rendering the output, are particularly typical: in the SLIC computation, they do not generate new high-value clinical semantics, and therefore their SSV is low; yet once hallucination is deliberately injected into these nodes, the system's Score Drop increases dramatically. This shows that what appears here is not a general failure of SSV, but a deeper structural phenomenon, namely privilege--capability (behavior) misalignment: some nodes do not undertake the main semantic generation work, but because they occupy the end of the chain or a critical transmission position, they actually possess excessive destructive power over the final result.

% Figure 4 placeholder
\begin{figure}[t]
  \centering
   \includegraphics[width=0.65\linewidth]{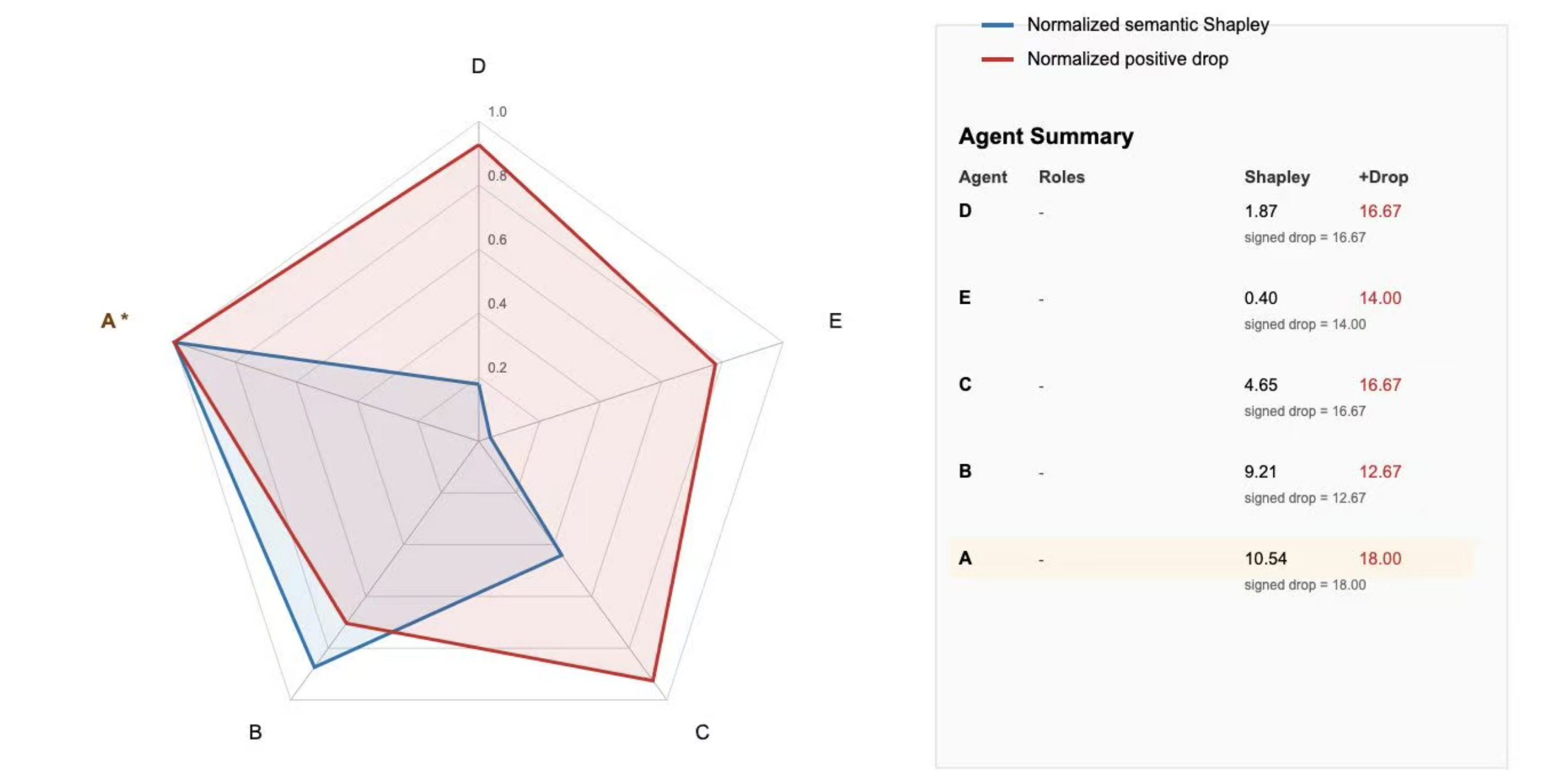}
  \caption{Results after introducing deliberate perturbations in the linear workflow.}
  \label{fig:linear_attack_failure}
\end{figure}

\begin{table}[t]
\centering
\caption{Aligned-50 archive: per-setting average shape alignment.}
\label{tab:appendix-aligned50}
\begin{tabular}{lccccc}
\toprule
Setting & $n$ & Weak & Medium & Strong & LOO / C3-style \\
\midrule
Meeting decision summarization & 50 & 0.718 & 1.000 & 1.000 & 0.975 \\
Table-text numerical reasoning & 50 & 0.667 & 0.900 & 0.900 & 0.900 \\
Rule application & 50 & 0.500 & 0.700 & 0.700 & 0.700 \\
HealthBench-style medical safety & 50 & 0.564 & 0.872 & 0.872 & 0.667 \\
\midrule
Mean & 200 & 0.612 & 0.868 & 0.868 & 0.810 \\
\bottomrule
\end{tabular}
\end{table}

\subsubsection{Superficially Reasonable Branched Workflow and Latent End-Stage Bottleneck}

To further show that this phenomenon is not restricted to extreme linear structures, we design another counterexample experiment, illustrated in Fig.~\ref{fig:specialist_panel_mismatch}. The workflow first enforces strict expert specialization, with multiple branches proceeding in parallel. A dedicated final node \(F\) then merges the branches and performs final formatting and organization.

At first glance, this workflow appears to have a clear early-stage division of labor and a proper end-stage consolidation mechanism: the upstream nodes each complete relatively independent subtasks, while the downstream node \(F\) is only responsible for unified formatting, integration, and output refinement. Therefore, it seems more reasonable than a linear serial structure and closer to a superficially healthy collaborative system. Under this apparent division of labor, \(F\)'s direct contribution to semantic generation should be low, and this is indeed reflected in its semantic Shapley value.

However, the results in the figure show that although \(F\) receives a very low SSV score, it still holds extremely strong end-stage control privileges. Once targeted perturbations are applied to \(F\), even a slight formatting error, branch-merging deviation, or end-stage rephrasing can cause the system score to drop sharply, or even collapse entirely. This indicates that the problem is not that SSV fails to identify explicit semantic contributions. Rather, although the workflow appears to have completed early-stage specialization, it still concentrates the power to shape the final output in a single end node, thereby creating a latent structural bottleneck.

\begin{figure}[t]
  \centering
  \includegraphics[width=0.65\linewidth]{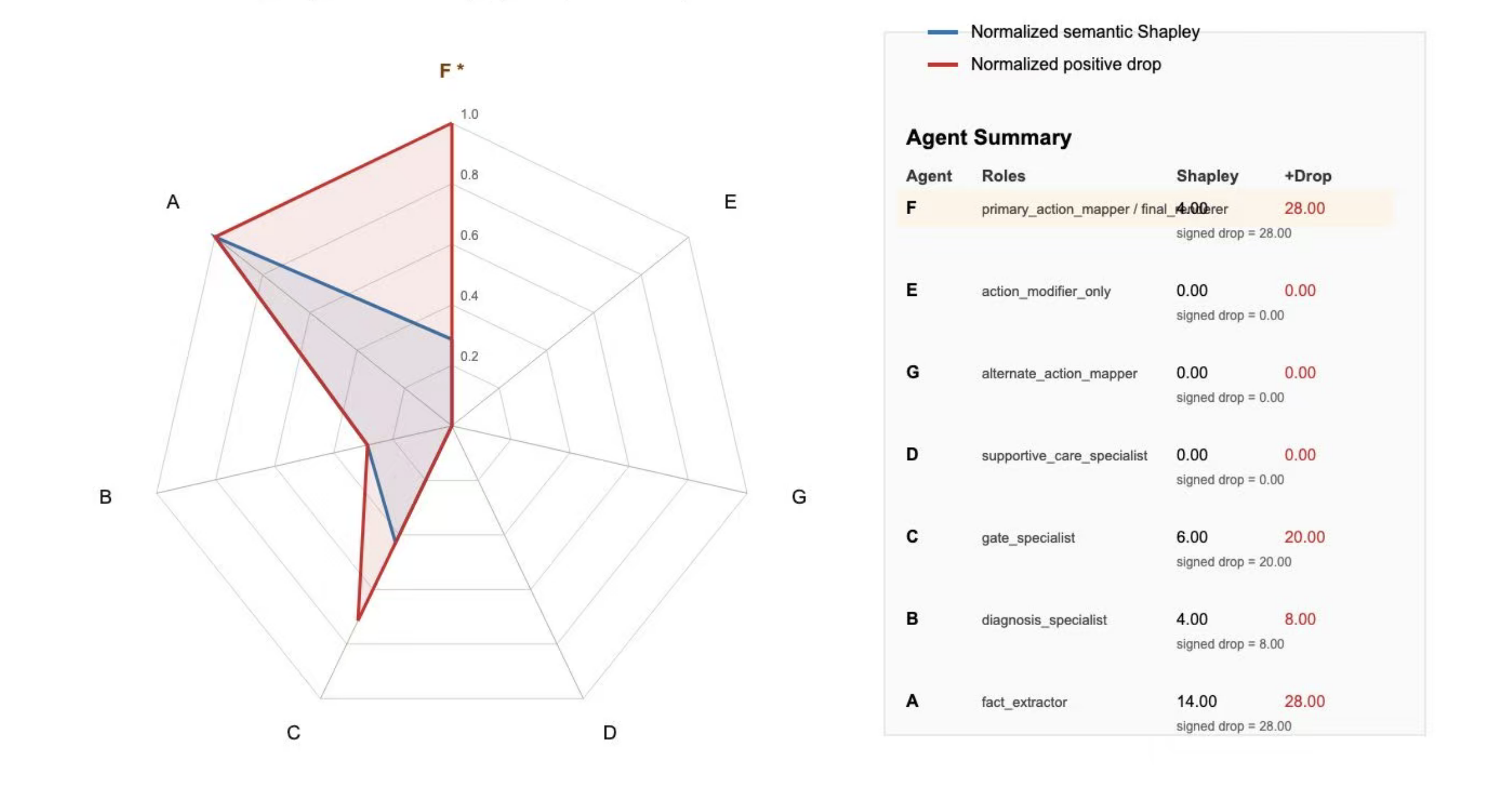}  
  \caption{In a branched workflow with clear early-stage specialization and a dedicated end node \(F\) responsible for final formatting and integration, \(F\) has a low SSV but causes a large system loss after deliberate perturbation.}
  \label{fig:specialist_panel_mismatch}
\end{figure}

\begin{table}[t]
\centering
\small
\caption{Meeting decision summarization: normalized profiles.}
\label{tab:appendix-radar-meeting}
\begin{tabular}{lccccc}
\toprule
Series & A & B & C & D & E \\
\midrule
SSV & 0.345 & 0.376 & 0.279 & 0.000 & 0.000 \\
Weak & 0.510 & 0.093 & 0.004 & 0.013 & 0.000 \\
Medium & 0.348 & 0.387 & 0.265 & 0.000 & 0.000 \\
Strong & 0.350 & 0.385 & 0.265 & 0.000 & 0.000 \\
LOO & 0.354 & 0.382 & 0.262 & 0.002 & 0.000 \\
\bottomrule
\end{tabular}
\end{table}

\begin{table}[t]
\centering
\small
\caption{Table-text numerical reasoning: normalized profiles.}
\label{tab:appendix-radar-tabletext}
\begin{tabular}{lccccc}
\toprule
Series & A & B & C & D & E \\
\midrule
SSV & 0.539 & 0.342 & 0.117 & 0.001 & 0.000 \\
Weak & 0.048 & 0.578 & 0.134 & 0.000 & 0.000 \\
Medium & 0.536 & 0.137 & 0.306 & 0.020 & 0.000 \\
Strong & 0.526 & 0.138 & 0.314 & 0.022 & 0.000 \\
LOO & 0.893 & 0.082 & 0.006 & 0.019 & 0.000 \\
\bottomrule
\end{tabular}
\end{table}

\subsubsection{Discussion and Summary}

Through the progression from ordinary correlation validation to amplified perturbation analysis, this experiment forms a relatively clear chain of evidence. First, and most importantly, these positive correlation results show that SSV not only provides a qualitative explanation, but also has strong comparative power: it can stably distinguish the relative importance of different nodes and, in most cases, remains consistent with the trend of the true Score Drop. For an attribution metric designed for multi-agent collaboration, this means that SSV can serve as a comparable, rankable contribution scale for system analysis, rather than merely an ex post descriptive score.

\begin{table}[t]
\centering
\small
\caption{Rule application: normalized profiles.}
\label{tab:appendix-radar-rule}
\begin{tabular}{lccccc}
\toprule
Series & A & B & C & D & E \\
\midrule
SSV & 0.299 & 0.200 & 0.384 & 0.117 & 0.000 \\
Weak & 0.087 & 0.470 & 0.215 & 0.089 & 0.000 \\
Medium & 0.251 & 0.174 & 0.305 & 0.269 & 0.000 \\
Strong & 0.253 & 0.182 & 0.300 & 0.265 & 0.000 \\
LOO & 0.269 & 0.145 & 0.298 & 0.288 & 0.000 \\
\bottomrule
\end{tabular}
\end{table}

\begin{table}[t]
\centering
\small
\caption{HealthBench-style medical safety: normalized profiles.}
\label{tab:appendix-radar-health}
\begin{tabular}{lccccc}
\toprule
Series & A & B & C & D & E \\
\midrule
SSV & 0.392 & 0.000 & 0.384 & 0.224 & 0.000 \\
Weak & 0.017 & 0.007 & 0.414 & 0.562 & 0.000 \\
Medium & 0.329 & 0.008 & 0.346 & 0.316 & 0.000 \\
Strong & 0.334 & 0.005 & 0.346 & 0.316 & 0.000 \\
LOO & 0.152 & 0.012 & 0.052 & 0.783 & 0.000 \\
\bottomrule
\end{tabular}
\end{table}
At the same time, the repeatedly observed slight underestimation of downstream nodes in the first three highly correlated plots suggests that, even when the overall method is effective, there may still be local tension between explicit semantic contribution and end-stage control influence. Upstream nodes typically complete the main work of evidence extraction, diagnostic reasoning, and constraint analysis; therefore, although downstream nodes no longer generate new high-value semantic content and thus receive lower SSV scores, they still possess strong integration, rewriting, and shaping privileges because they are closer to the final output. It is precisely in this sense that we identify a structural phenomenon, namely privilege--capability misalignment.

Furthermore, in the amplified attack experiment on the linear workflow, this tension is significantly magnified, to the point that almost every node exhibits strong global destructive potential and the correspondence between SSV and the true loss is clearly weakened. Even in the branched structure with clear early-stage specialization and a dedicated formatting node at the end, we likewise observe that once the final shaping power remains concentrated in a single node, the coexistence of low semantic contribution and high destructive capability still emerges. This also shows, from the opposite direction, that in a truly healthy system, a node's semantic contribution should broadly align with the structural privileges it holds.

Therefore, the main conclusion of this section remains positive and clear: SSV is an effective semantic attribution metric. It not only stably reflects node importance, but also provides meaningful contribution rankings for multi-agent systems. The anomalous deviations shown in the counterexample figures further indicate that when the workflow itself suffers from concentrated end-stage control, overly strong chain dependence, or imbalanced role boundaries, a node's actual destructive capability may exceed the scope that its explicit semantic contribution can capture. This is a factor that must be considered, and cannot be ignored, when designing healthy and robust systems.

\section{Extended Radar Figures and Raw Tables}
\label{app:radars}

The main text keeps only the most stable summary results of the current branched-workflow study. This appendix supplements them with broader radar results, showing that the alignment pattern is not specific to one workflow.

\paragraph{Evaluation protocol.}
For each case, we first obtain the SSV vector
\[
\mathrm{SSV}(G)=(\mathrm{SSV}_i(G))_{i\in N}.
\]
We then construct four interventions on the same workflow:
\texttt{weak hallucination}, \texttt{medium hallucination}, \texttt{strong hallucination}, and \texttt{LOO rerun}. Each intervention produces a positive rubric score-drop vector decomposed by agent,
\[
d=(d_i)_{i\in N}.
\]
To compare only the attribution shape rather than absolute magnitude, we sum-normalize both the SSV vector and the drop vector:
\[
\bar z_i=\frac{z_i}{\sum_j z_j},
\]
where $z$ may refer either to SSV or to any intervention-based drop vector. We then compare the two normalized curves via Spearman $\rho$.

\paragraph{Aligned-50 archive.}
The current archive covers four task settings: meeting summarization, table-text numerical reasoning, rule-based judgment, and medical safety QA. They follow the same design principle: clear responsibility boundaries, explicit partial access, and no permission for downstream agents to freely reconstruct missing semantics, so that local perturbations expose cleaner structural responses.

Table~\ref{tab:appendix-aligned50} reveals three stable patterns. First, weak hallucination aligns much more weakly because mild perturbations often fail to flip the rubric. Second, medium and strong hallucination are consistently closer to SSV across the four settings, indicating that when an agent-owned semantic block is systematically damaged, the shape of SSV tracks the shape of the actual value collapse. Third, LOO / C3-style reruns can match SSV in some healthy workflows, but degrade noticeably when redundant reconstruction or downstream repair is available.Detailed radar-analysis results for the synthetic tasks are reported in
Tables~\ref{tab:appendix-radar-meeting}, \ref{tab:appendix-radar-tabletext}, \ref{tab:appendix-radar-rule} and \ref{tab:appendix-radar-health}.

These results complement the anomaly analysis in the main text. Overall, Experiment 2 supports the following reading: SSV does predict system response in most healthy workflows, but once a workflow contains a strong privilege--capability mismatch, or once the removal baseline is contaminated by redundancy, a pure leave-out/C3 perspective becomes unreliable.

\paragraph{Raw normalized profiles.}
We also provide the average normalized profiles behind Fig.~\ref{fig:experiment2-radar-overview} and the related appendix figures.

\section{Experimental Configurations}
\label{app:config}

This appendix summarizes the implementation settings for the two main experimental groups. The four-scenario radar sensitivity experiments cover \texttt{qmsum\_meeting}, \texttt{tatqa\_reasoning}, \texttt{legalbench\_rule}, and \texttt{healthbench-style}. The HealthBench SLIC attribution experiment is the controlled experiment used to compare SLIC with classical Shapley-style baselines.

\begin{table}[h]
\centering
\caption{Model and decoding configurations for the main experiments.}
\label{tab:main-sensitivity-config}
\begin{tabular}{lll}
\toprule
Experiment group & Function & Model \\
\midrule
Four-scenario radar & Agent generation & \texttt{models/gemini-2.5-flash} \\
Four-scenario radar & Rubric judge & \texttt{models/gemini-2.5-flash} \\
Four-scenario radar & Semantic analyzer / link extraction & \texttt{models/gemini-2.5-flash} \\
Four-scenario radar & Attack / rerun downstream generation & \texttt{models/gemini-2.5-flash} \\
\midrule
HealthBench Shapley-alignment & Agent generation / merge & \texttt{models/gemini-2.5-flash} \\
HealthBench Shapley-alignment & Rubric scoring / rejudge & \texttt{models/gemini-2.5-pro} \\
HealthBench Shapley-alignment & Semantic extractor / SLIC analyzer & \texttt{models/gemini-2.5-pro} \\
HealthBench Shapley-alignment & Holistic LLM Judge baseline & \texttt{models/gemini-2.5-pro} \\
\bottomrule
\end{tabular}
\end{table}

For the four-scenario radar experiments, all calls use temperature \(0.0\), thinking budget \(0\), timeout mostly \(60\) seconds, and usually one retry. The maximum output length is around \(1600\)--\(2200\) tokens for generation and around \(4000\) tokens for judge or analyzer calls.

For the HealthBench Shapley-alignment experiment, all calls also use temperature \(0.0\). We set \texttt{GEMINI\_THINKING\_BUDGET=256}; the default timeout is \(45\)--\(60\) seconds, and retries are usually set to \(1\).

\section{Broader Impacts}
\label{app:broader-impacts}

This work studies credit assignment in LLM-based multi-agent systems through a semantic cooperative game framework. The potential positive societal impact is that the proposed framework may improve the transparency, accountability, and diagnosability of multi-agent AI workflows. By attributing contributions to value-relevant semantic structures rather than only to agents or black-box outputs, the method can help developers better understand which agents, messages, or semantic transformations support a final decision. This may be useful for auditing collaborative AI systems in domains where interpretability and responsibility are important, such as medical assistance, legal reasoning, scientific analysis, and organizational decision support.

The work may also support safer deployment of multi-agent systems by making failure sources easier to identify. For example, if a downstream answer depends heavily on an unsupported or misleading semantic link, the proposed attribution structure can help reveal where the problematic reasoning entered the workflow. This can provide a basis for debugging, model selection, workflow redesign, or human oversight.

At the same time, this work may have negative societal impacts if used in inappropriate settings. More efficient or more interpretable coordination mechanisms for LLM-based agents could also improve the effectiveness of harmful multi-agent systems, including systems used for misinformation generation, automated persuasion, surveillance, or strategic manipulation. In addition, contribution scores may be misinterpreted as definitive measures of responsibility, even though they depend on the extracted semantic structure, the evaluation rubric, and the observed trajectory. Over-reliance on such scores could lead to unfair blame assignment or misplaced trust in automated explanations.

We therefore view the proposed method as an analysis and diagnostic tool rather than a complete accountability mechanism. In high-stakes applications, semantic contribution analysis should be combined with human review, domain-specific safety constraints, privacy protection, and careful validation of the evaluation criteria. The framework is intended to make multi-agent AI systems more transparent, but it does not remove the need for broader governance, auditing, and responsible deployment practices.

\clearpage
\section*{NeurIPS Paper Checklist}
\begin{enumerate}

\item {\bf Claims}
    \item[] Question: Do the main claims made in the abstract and introduction accurately reflect the paper's contributions and scope?
    \item[] Answer: \answerYes{} % Replace by \answerYes{}, \answerNo{}, or \answerNA{}.
    \item[] Justification:We do
    \item[] Guidelines:
    \begin{itemize}
        \item The answer \answerNA{} means that the abstract and introduction do not include the claims made in the paper.
        \item The abstract and/or introduction should clearly state the claims made, including the contributions made in the paper and important assumptions and limitations. A \answerNo{} or \answerNA{} answer to this question will not be perceived well by the reviewers. 
        \item The claims made should match theoretical and experimental results, and reflect how much the results can be expected to generalize to other settings. 
        \item It is fine to include aspirational goals as motivation as long as it is clear that these goals are not attained by the paper. 
    \end{itemize}

\item {\bf Limitations}
    \item[] Question: Does the paper discuss the limitations of the work performed by the authors?
    \item[] Answer: \answerYes{} % Replace by \answerYes{}, \answerNo{}, or \answerNA{}.
    \item[] Justification: We do in appendix
    \item[] Guidelines:
    \begin{itemize}
        \item The answer \answerNA{} means that the paper has no limitation while the answer \answerNo{} means that the paper has limitations, but those are not discussed in the paper. 
        \item The authors are encouraged to create a separate ``Limitations'' section in their paper.
        \item The paper should point out any strong assumptions and how robust the results are to violations of these assumptions (e.g., independence assumptions, noiseless settings, model well-specification, asymptotic approximations only holding locally). The authors should reflect on how these assumptions might be violated in practice and what the implications would be.
        \item The authors should reflect on the scope of the claims made, e.g., if the approach was only tested on a few datasets or with a few runs. In general, empirical results often depend on implicit assumptions, which should be articulated.
        \item The authors should reflect on the factors that influence the performance of the approach. For example, a facial recognition algorithm may perform poorly when image resolution is low or images are taken in low lighting. Or a speech-to-text system might not be used reliably to provide closed captions for online lectures because it fails to handle technical jargon.
        \item The authors should discuss the computational efficiency of the proposed algorithms and how they scale with dataset size.
        \item If applicable, the authors should discuss possible limitations of their approach to address problems of privacy and fairness.
        \item While the authors might fear that complete honesty about limitations might be used by reviewers as grounds for rejection, a worse outcome might be that reviewers discover limitations that aren't acknowledged in the paper. The authors should use their best judgment and recognize that individual actions in favor of transparency play an important role in developing norms that preserve the integrity of the community. Reviewers will be specifically instructed to not penalize honesty concerning limitations.
    \end{itemize}

\item {\bf Theory assumptions and proofs}
    \item[] Question: For each theoretical result, does the paper provide the full set of assumptions and a complete (and correct) proof?
    \item[] Answer:  \answerYes{} % Replace by \answerYes{}, \answerNo{}, or \answerNA{}.
    \item[] Justification: We provide detailed proofs in the appendix.
    \item[] Guidelines:
    \begin{itemize}
        \item The answer \answerNA{} means that the paper does not include theoretical results. 
        \item All the theorems, formulas, and proofs in the paper should be numbered and cross-referenced.
        \item All assumptions should be clearly stated or referenced in the statement of any theorems.
        \item The proofs can either appear in the main paper or the supplemental material, but if they appear in the supplemental material, the authors are encouraged to provide a short proof sketch to provide intuition. 
        \item Inversely, any informal proof provided in the core of the paper should be complemented by formal proofs provided in appendix or supplemental material.
        \item Theorems and Lemmas that the proof relies upon should be properly referenced. 
    \end{itemize}

    \item {\bf Experimental result reproducibility}
    \item[] Question: Does the paper fully disclose all the information needed to reproduce the main experimental results of the paper to the extent that it affects the main claims and/or conclusions of the paper (regardless of whether the code and data are provided or not)?
    \item[] Answer: \answerYes{} % Replace by \answerYes{}, \answerNo{}, or \answerNA{}.
    \item[] Justification: details on our github
    \item[] Guidelines:
    \begin{itemize}
        \item The answer \answerNA{} means that the paper does not include experiments.
        \item If the paper includes experiments, a \answerNo{} answer to this question will not be perceived well by the reviewers: Making the paper reproducible is important, regardless of whether the code and data are provided or not.
        \item If the contribution is a dataset and\slash or model, the authors should describe the steps taken to make their results reproducible or verifiable. 
        \item Depending on the contribution, reproducibility can be accomplished in various ways. For example, if the contribution is a novel architecture, describing the architecture fully might suffice, or if the contribution is a specific model and empirical evaluation, it may be necessary to either make it possible for others to replicate the model with the same dataset, or provide access to the model. In general. releasing code and data is often one good way to accomplish this, but reproducibility can also be provided via detailed instructions for how to replicate the results, access to a hosted model (e.g., in the case of a large language model), releasing of a model checkpoint, or other means that are appropriate to the research performed.
        \item While NeurIPS does not require releasing code, the conference does require all submissions to provide some reasonable avenue for reproducibility, which may depend on the nature of the contribution. For example
        \begin{enumerate}
            \item If the contribution is primarily a new algorithm, the paper should make it clear how to reproduce that algorithm.
            \item If the contribution is primarily a new model architecture, the paper should describe the architecture clearly and fully.
            \item If the contribution is a new model (e.g., a large language model), then there should either be a way to access this model for reproducing the results or a way to reproduce the model (e.g., with an open-source dataset or instructions for how to construct the dataset).
            \item We recognize that reproducibility may be tricky in some cases, in which case authors are welcome to describe the particular way they provide for reproducibility. In the case of closed-source models, it may be that access to the model is limited in some way (e.g., to registered users), but it should be possible for other researchers to have some path to reproducing or verifying the results.
        \end{enumerate}
    \end{itemize}

\item {\bf Open access to data and code}
    \item[] Question: Does the paper provide open access to the data and code, with sufficient instructions to faithfully reproduce the main experimental results, as described in supplemental material?
    \item[] Answer: \answerYes{} % Replace by \answerYes{}, \answerNo{}, or \answerNA{}.
    \item[] Justification: details on our github
    \item[] Guidelines:
    \begin{itemize}
        \item The answer \answerNA{} means that paper does not include experiments requiring code.
        \item Please see the NeurIPS code and data submission guidelines (\url{https://neurips.cc/public/guides/CodeSubmissionPolicy}) for more details.
        \item While we encourage the release of code and data, we understand that this might not be possible, so \answerNo{} is an acceptable answer. Papers cannot be rejected simply for not including code, unless this is central to the contribution (e.g., for a new open-source benchmark).
        \item The instructions should contain the exact command and environment needed to run to reproduce the results. See the NeurIPS code and data submission guidelines (\url{https://neurips.cc/public/guides/CodeSubmissionPolicy}) for more details.
        \item The authors should provide instructions on data access and preparation, including how to access the raw data, preprocessed data, intermediate data, and generated data, etc.
        \item The authors should provide scripts to reproduce all experimental results for the new proposed method and baselines. If only a subset of experiments are reproducible, they should state which ones are omitted from the script and why.
        \item At submission time, to preserve anonymity, the authors should release anonymized versions (if applicable).
        \item Providing as much information as possible in supplemental material (appended to the paper) is recommended, but including URLs to data and code is permitted.
    \end{itemize}

\item {\bf Experimental setting/details}
    \item[] Question: Does the paper specify all the training and test details (e.g., data splits, hyperparameters, how they were chosen, type of optimizer) necessary to understand the results?
    \item[] Answer:  \answerYes{} % Replace by \answerYes{}, \answerNo{}, or \answerNA{}.
    \item[] Justification:details on our github
    \item[] Guidelines:
    \begin{itemize}
        \item The answer \answerNA{} means that the paper does not include experiments.
        \item The experimental setting should be presented in the core of the paper to a level of detail that is necessary to appreciate the results and make sense of them.
        \item The full details can be provided either with the code, in appendix, or as supplemental material.
    \end{itemize}

\item {\bf Experiment statistical significance}
    \item[] Question: Does the paper report error bars suitably and correctly defined or other appropriate information about the statistical significance of the experiments?
    \item[] Answer:  \answerNo{} % Replace by \answerYes{}, \answerNo{}, or \answerNA{}.
    \item[] Justification: Our experimental error bars are not particularly meaningful.
    \item[] Guidelines:
    \begin{itemize}
        \item The answer \answerNA{} means that the paper does not include experiments.
        \item The authors should answer \answerYes{} if the results are accompanied by error bars, confidence intervals, or statistical significance tests, at least for the experiments that support the main claims of the paper.
        \item The factors of variability that the error bars are capturing should be clearly stated (for example, train/test split, initialization, random drawing of some parameter, or overall run with given experimental conditions).
        \item The method for calculating the error bars should be explained (closed form formula, call to a library function, bootstrap, etc.)
        \item The assumptions made should be given (e.g., Normally distributed errors).
        \item It should be clear whether the error bar is the standard deviation or the standard error of the mean.
        \item It is OK to report 1-sigma error bars, but one should state it. The authors should preferably report a 2-sigma error bar than state that they have a 96\% CI, if the hypothesis of Normality of errors is not verified.
        \item For asymmetric distributions, the authors should be careful not to show in tables or figures symmetric error bars that would yield results that are out of range (e.g., negative error rates).
        \item If error bars are reported in tables or plots, the authors should explain in the text how they were calculated and reference the corresponding figures or tables in the text.
    \end{itemize}

\item {\bf Experiments compute resources}
    \item[] Question: For each experiment, does the paper provide sufficient information on the computer resources (type of compute workers, memory, time of execution) needed to reproduce the experiments?
    \item[] Answer: \answerYes{} % Replace by \answerYes{}, \answerNo{}, or \answerNA{}.
    \item[] Justification: We mentioned in the paper, details on our github.
    \item[] Guidelines:
    \begin{itemize}
        \item The answer \answerNA{} means that the paper does not include experiments.
        \item The paper should indicate the type of compute workers CPU or GPU, internal cluster, or cloud provider, including relevant memory and storage.
        \item The paper should provide the amount of compute required for each of the individual experimental runs as well as estimate the total compute. 
        \item The paper should disclose whether the full research project required more compute than the experiments reported in the paper (e.g., preliminary or failed experiments that didn't make it into the paper). 
    \end{itemize}
    
\item {\bf Code of ethics}
    \item[] Question: Does the research conducted in the paper conform, in every respect, with the NeurIPS Code of Ethics \url{https://neurips.cc/public/EthicsGuidelines}?
    \item[] Answer: \answerYes{} % Replace by \answerYes{}, \answerNo{}, or \answerNA{}.
    \item[] Justification: We do
    \item[] Guidelines:
    \begin{itemize}
        \item The answer \answerNA{} means that the authors have not reviewed the NeurIPS Code of Ethics.
        \item If the authors answer \answerNo, they should explain the special circumstances that require a deviation from the Code of Ethics.
        \item The authors should make sure to preserve anonymity (e.g., if there is a special consideration due to laws or regulations in their jurisdiction).
    \end{itemize}

\item {\bf Broader impacts}
    \item[] Question: Does the paper discuss both potential positive societal impacts and negative societal impacts of the work performed?
    \item[] Answer: \answerYes{} % Replace by \answerYes{}, \answerNo{}, or \answerNA{}.
    \item[] Justification: We mentioned in our appendix
    \item[] Guidelines:
    \begin{itemize}
        \item The answer \answerNA{} means that there is no societal impact of the work performed.
        \item If the authors answer \answerNA{} or \answerNo, they should explain why their work has no societal impact or why the paper does not address societal impact.
        \item Examples of negative societal impacts include potential malicious or unintended uses (e.g., disinformation, generating fake profiles, surveillance), fairness considerations (e.g., deployment of technologies that could make decisions that unfairly impact specific groups), privacy considerations, and security considerations.
        \item The conference expects that many papers will be foundational research and not tied to particular applications, let alone deployments. However, if there is a direct path to any negative applications, the authors should point it out. For example, it is legitimate to point out that an improvement in the quality of generative models could be used to generate Deepfakes for disinformation. On the other hand, it is not needed to point out that a generic algorithm for optimizing neural networks could enable people to train models that generate Deepfakes faster.
        \item The authors should consider possible harms that could arise when the technology is being used as intended and functioning correctly, harms that could arise when the technology is being used as intended but gives incorrect results, and harms following from (intentional or unintentional) misuse of the technology.
        \item If there are negative societal impacts, the authors could also discuss possible mitigation strategies (e.g., gated release of models, providing defenses in addition to attacks, mechanisms for monitoring misuse, mechanisms to monitor how a system learns from feedback over time, improving the efficiency and accessibility of ML).
    \end{itemize}
    
\item {\bf Safeguards}
    \item[] Question: Does the paper describe safeguards that have been put in place for responsible release of data or models that have a high risk for misuse (e.g., pre-trained language models, image generators, or scraped datasets)?
    \item[] Answer:\answerNA{} % Replace by \answerYes{}, \answerNo{}, or \answerNA{}.
    \item[] Justification:  poses no such risks
    \item[] Guidelines:
    \begin{itemize}
        \item The answer \answerNA{} means that the paper poses no such risks.
        \item Released models that have a high risk for misuse or dual-use should be released with necessary safeguards to allow for controlled use of the model, for example by requiring that users adhere to usage guidelines or restrictions to access the model or implementing safety filters. 
        \item Datasets that have been scraped from the Internet could pose safety risks. The authors should describe how they avoided releasing unsafe images.
        \item We recognize that providing effective safeguards is challenging, and many papers do not require this, but we encourage authors to take this into account and make a best faith effort.
    \end{itemize}

\item {\bf Licenses for existing assets}
    \item[] Question: Are the creators or original owners of assets (e.g., code, data, models), used in the paper, properly credited and are the license and terms of use explicitly mentioned and properly respected?
    \item[] Answer: \answerNA{} % Replace by \answerYes{}, \answerNo{}, or \answerNA{}.
    \item[] Justification: does not release new assets
    \item[] Guidelines:
    \begin{itemize}
        \item The answer \answerNA{} means that the paper does not release new assets
        \item The authors should cite the original paper that produced the code package or dataset.
        \item The authors should state which version of the asset is used and, if possible, include a URL.
        \item The name of the license (e.g., CC-BY 4.0) should be included for each asset.
        \item For scraped data from a particular source (e.g., website), the copyright and terms of service of that source should be provided.
        \item If assets are released, the license, copyright information, and terms of use in the package should be provided. For popular datasets, \url{paperswithcode.com/datasets} has curated licenses for some datasets. Their licensing guide can help determine the license of a dataset.
        \item For existing datasets that are re-packaged, both the original license and the license of the derived asset (if it has changed) should be provided.
        \item If this information is not available online, the authors are encouraged to reach out to the asset's creators.
    \end{itemize}

\item {\bf New assets}
    \item[] Question: Are new assets introduced in the paper well documented and is the documentation provided alongside the assets?
    \item[] Answer: \answerNA{}{} % Replace by \answerYes{}, \answerNo{}, or \answerNA{}.
    \item[] Justification: does not release new assets
    \item[] Guidelines:
    \begin{itemize}
        \item The answer \answerNA{} means that the paper does not release new assets.
        \item Researchers should communicate the details of the dataset\slash code\slash model as part of their submissions via structured templates. This includes details about training, license, limitations, etc. 
        \item The paper should discuss whether and how consent was obtained from people whose asset is used.
        \item At submission time, remember to anonymize your assets (if applicable). You can either create an anonymized URL or include an anonymized zip file.
    \end{itemize}

\item {\bf Crowdsourcing and research with human subjects}
    \item[] Question: For crowdsourcing experiments and research with human subjects, does the paper include the full text of instructions given to participants and screenshots, if applicable, as well as details about compensation (if any)? 
    \item[] Answer: \answerNA{} % Replace by \answerYes{}, \answerNo{}, or \answerNA{}.
    \item[] Justification: does not involve crowdsourcing nor research with human subjects.
    \begin{itemize}
        \item The answer \answerNA{} means that the paper does not involve crowdsourcing nor research with human subjects.
        \item Including this information in the supplemental material is fine, but if the main contribution of the paper involves human subjects, then as much detail as possible should be included in the main paper. 
        \item According to the NeurIPS Code of Ethics, workers involved in data collection, curation, or other labor should be paid at least the minimum wage in the country of the data collector. 
    \end{itemize}

\item {\bf Institutional review board (IRB) approvals or equivalent for research with human subjects}
    \item[] Question: Does the paper describe potential risks incurred by study participants, whether such risks were disclosed to the subjects, and whether Institutional Review Board (IRB) approvals (or an equivalent approval/review based on the requirements of your country or institution) were obtained?
    \item[] Answer: \answerNA{} % Replace by \answerYes{}, \answerNo{}, or \answerNA{}.
    \item[] Justification: the paper does not involve crowdsourcing nor research with human subjects.
    \item[] Guidelines:
    \begin{itemize}
        \item The answer \answerNA{} means that the paper does not involve crowdsourcing nor research with human subjects.
        \item Depending on the country in which research is conducted, IRB approval (or equivalent) may be required for any human subjects research. If you obtained IRB approval, you should clearly state this in the paper. 
        \item We recognize that the procedures for this may vary significantly between institutions and locations, and we expect authors to adhere to the NeurIPS Code of Ethics and the guidelines for their institution. 
        \item For initial submissions, do not include any information that would break anonymity (if applicable), such as the institution conducting the review.
    \end{itemize}

\item {\bf Declaration of LLM usage}
    \item[] Question: Does the paper describe the usage of LLMs if it is an important, original, or non-standard component of the core methods in this research? Note that if the LLM is used only for writing, editing, or formatting purposes and does \emph{not} impact the core methodology, scientific rigor, or originality of the research, declaration is not required.
    %this research? 
    \item[] Answer: \answerNA{} % Replace by \answerYes{}, \answerNo{}, or \answerNA{}.
    \item[] Justification:the core method development in this research does not involve LLMs as any important, original, or non-standard components
    \item[] Guidelines:
    \begin{itemize}
        \item The answer \answerNA{} means that the core method development in this research does not involve LLMs as any important, original, or non-standard components.
        \item Please refer to our LLM policy in the NeurIPS handbook for what should or should not be described.
    \end{itemize}

\end{enumerate}

\end{document}